\documentclass{article}

\usepackage[preprint]{neurips_2026}
\makeatletter
\providecommand{\@trackname}{} 
\makeatother

\usepackage[utf8]{inputenc}
\usepackage[T1]{fontenc}

\usepackage{url}
\usepackage{booktabs}
\usepackage{amsfonts}
\usepackage{nicefrac}
\usepackage{microtype}
\usepackage[table]{xcolor}
\usepackage{colortbl}
\usepackage{algorithm}
\usepackage{algpseudocode} 
\usepackage{amsmath}
\usepackage{amsthm}
\usepackage{amssymb}
\usepackage{bm}
\usepackage{graphicx}
\usepackage{subcaption}
\definecolor{algoblue}{RGB}{140, 102, 76}
\definecolor{algogreen}{RGB}{34, 120, 102}
\definecolor{algoamber}{RGB}{166, 110, 42}
\definecolor{algogray}{RGB}{110, 118, 129}
\definecolor{algobg}{RGB}{247, 248, 250}

\newcommand{\structstep}[1]{\textcolor{algoblue}{\textbf{#1}}}
\newcommand{\algocommenttext}[1]{\textcolor{algogray}{\footnotesize #1}}

\algrenewcommand{\algorithmiccomment}[1]{\hfill$\triangleright$ \algocommenttext{#1}}

\usepackage{threeparttable}
\usepackage{siunitx}
\usepackage{tikz}
\usetikzlibrary{arrows.meta}

\definecolor{bestbg}{RGB}{210,235,222}
\definecolor{secondbg}{RGB}{235,247,239}
\definecolor{modelbg}{RGB}{244,248,245}

\newcommand{\best}[1]{\cellcolor{bestbg}\textbf{#1}}
\newcommand{\second}[1]{\cellcolor{secondbg}#1}

\definecolor{ModernGreen}{RGB}{46,125,90}
\definecolor{ModernGreenBright}{RGB}{88,166,120}
\definecolor{ModernGreenSoft}{RGB}{236,244,239}

\usepackage[
    colorlinks=true,
    linkcolor=ModernGreen,
    urlcolor=ModernGreenBright,
    citecolor=ModernGreen
]{hyperref}

\newcommand{\rededit}[1]{#1}

\definecolor{grpogreen}{RGB}{46,125,50}
\definecolor{grpored}{RGB}{183,28,28}

\newcommand{\grpoabove}[1]{\textsuperscript{\scalebox{0.9}{\textcolor{grpogreen}{$\uparrow #1$}}}}
\newcommand{\grpobelow}[1]{\textsuperscript{\scalebox{0.9}{\textcolor{grpored}{$\downarrow #1$}}}}

\newtheorem{theorem}{Theorem}
\newtheorem*{theorem*}{Theorem}
\newtheorem{proposition}{Proposition}
\newtheorem{corollary}{Corollary}

\algrenewcommand\algorithmicrequire{\textbf{Input:}}
\algrenewcommand\algorithmicensure{\textbf{Output:}}

\algnewcommand{\LineComment}[1]{\State \(\triangleright\) \textit{#1}}

\theoremstyle{remark}

\newcommand{\mz}[1]{#1}

\title{The Model Knows, the Decoder Finds: Future Value Guided Particle Power Sampling}

\author{
Tu Nguyen$^{1}$\thanks{Equal contribution.} \quad
Matthieu Zimmer$^{2}$\footnotemark[1] \quad
Rasul Tutunov$^{2}$ \quad
Xiaotong Ji$^{2}$ \\
{\bf Haitham Bou Ammar}$^{2,3}$ \\
\\
$^{1}$Huawei Heisenberg Research Center \\
$^{2}$Huawei Noah's Ark Lab \\
$^{3}$UCL Centre for Artificial Intelligence \\
\texttt{\{tu.nguyen, matthieu.zimmer, rasul.tutunov, haitham.ammar\}@huawei.com} \\
\texttt{xiaotong.ji1@h-partners.com}
}
\begin{document}

\maketitle

\begin{abstract}
A recurring pattern in ``reasoning without training'' is that base LLMs already assign non-trivial probability mass to correct multi-step solutions; the bottleneck is locating these modes efficiently at inference time.
A principled way to bias inference toward such modes is \emph{power sampling}, i.e., sampling from $p_\theta(x)^\alpha$ with $\alpha>1$.
Recent work makes power sampling practical by estimating a future-dependent correction factor $z_t$ via Monte Carlo rollouts, thereby replacing iterative Markov chain Monte Carlo with forward-looking estimation.
In this paper, we reframe that correction factor as a \rededit{\emph{future-value selection potential}} in a Sequential Monte Carlo (SMC) view of power sampling: $z_t$ plays the role of a critic-like quantity, but can be estimated directly from the model by short-horizon rollouts, no verifier and no training required.

Building on this view, we introduce \textbf{Auxiliary Particle Power Sampling (APPS)}, a blockwise particle algorithm for training-free reasoning that approximates the sequence-level power target with a bounded population of partial solutions. APPS propagates these hypotheses in parallel by proposal-corrected power reweighting and refines their survival through future-value-guided selection at resampling boundaries, so that finite compute is redistributed across competing prefixes rather than spent along a single unfolding path. This yields a transparent scaling knob in the particle count, predictable peak memory, and a compute pattern that avoids both iterative trajectory editing and dense candidate-wise rollout fan-out, while improving robustness to pivotal early decisions by keeping multiple hypotheses alive throughout decoding. We further study an amortized variant in which the rollout-based selection potential is replaced by a lightweight learned head trained offline from rollout supervision, enabling fast future-value guidance at inference time. More broadly, APPS improves the accuracy--runtime trade-off of training-free decoding, further supporting the view that more faithful inference-time power approximation can recover gains often attributed to post-training.

\end{abstract}

\section{Introduction}

Large language models already contain many correct solutions: short proofs \citep{bai2023qwen}, clean derivations \citep{wei2022chain}, and executable programs that pass tests \citep{chen2021evaluating,li2022competition}.
The bottleneck is increasingly not capability, but reliability under finite inference.
Long contexts \citep{liu2024lost,hsiehruler}, tight latency and memory budgets \citep{wu2025tokenselect,behnam2025rocketkv}, and locally plausible early decisions can quietly derail the remainder of a trajectory \citep{gaoonce,huang2025path}.
This motivates a line of work that \emph{treats reasoning as an inference-time search problem}: without changing model parameters, can we steer generation toward higher-quality trajectories that already have nontrivial probability under the base model \citep{grpo,karan2025reasoning,ji2026scalable}?
\mz{A key insight is that the difficulty is not just in \emph{proposing} good 
continuations, but in \emph{selecting} which partial solutions deserve 
continued computation: a decision that depends on futures the decoder has 
not yet seen.}

\paragraph{From local sharpening to sequence-level targets.}
A natural training-free idea is to sharpen the model distribution.
Low-temperature sampling concentrates mass locally, but it remains myopic: it increases the probability of high-confidence next tokens without correcting for errors that only become visible many steps later.
Sequence-level methods instead target a globally sharpened power distribution over complete trajectories,
$\pi_\alpha(x)\propto p(x)^\alpha$ with $\alpha>1$.
Best-of-$N$ partially approximates this goal by selecting among independent samples, but selection under base-model likelihood is only an indirect proxy for $\pi_\alpha$.
MCMC power sampling targets $\pi_\alpha$ more directly in the limit, but can incur substantial wall-clock overhead because each step repeatedly regenerates suffixes \citep{karan2025reasoning}.
Recent scalable alternatives replace iterative MCMC mixing with explicit lookahead rollouts, improving finite-budget behavior but relocating the burden of inference to decision points, where branching becomes costly as contexts lengthen and batching tightens \citep{ji2026scalable}.
\paragraph{Why future value matters.}
A core difficulty in approximating $\pi_\alpha$ is that the quality of a prefix is not determined by its immediate likelihood alone.
For a prefix $x_{1:t}$, the remaining mass under the power distribution depends on a suffix normalizer
\[
z_t(x_{1:t})\;=\;\sum_{x_{t+1:T}} p(x_{t+1:T}\mid x_{1:t})^\alpha,
\]
whose log is a long-horizon future value: it measures how much power-weighted probability mass remains reachable from the current state.
Prefix-only reweighting can therefore be unstable at finite compute.
It may eliminate prefixes that look slightly worse locally but retain larger $z_t$, and it may over-amplify prefixes that are locally sharp yet lead into dead ends.
\mz{The central question is therefore: can we inject a tractable estimate of 
$z_t$ into finite-budget inference \emph{without} training a separate 
value model, and if so, \emph{where} in the inference algorithm does it 
yield the largest gain per unit of compute?}

\paragraph{Our stance: preserve many presents, and select by reachable futures.}
We take a population-based view in which future value becomes explicit precisely where finite-budget inference must decide what continues to receive mass. Rather than repeatedly unfolding many futures from a single prefix\mz{, which can be costly in \citet{ji2026scalable}}, we maintain a bounded population of prefixes that evolve together and compete over time. At resampling boundaries, selection is guided by reachable power-weighted mass: prefixes with stronger downstream promise retain representation, while collapsed regions relinquish it. In this way, APPS casts finite-budget decoding as a controlled redistribution of probability mass across competing partial trajectories. Its minimal form already instantiates this view; rollout and learned selection potentials refine the same selection law with increasingly explicit estimates of downstream value.

\paragraph{Auxiliary Particle Power Sampling.}
We introduce \textbf{APPS}, a training-free and verifier-free algorithm for approximating power distributions under a finite particle budget. APPS evolves a bounded population of partial trajectories in blocks of tokens, combining proposal-corrected weighting with weighted selection across prefixes. The key observation is that, under the global power target, the merit of a candidate block is not exhausted by its immediate likelihood: it also depends on the power-weighted mass of the continuations that remain reachable after committing to it. APPS is organized around this boundary law. Propagation stays local, while selection carries the long-horizon signal that determines which prefixes survive under finite compute.

\paragraph{Relation to concurrent baselines.}
Our approach is complementary to recent rollout-based and particle-based approximations. Scalable power sampling \citep{ji2026scalable} emphasizes explicit future correction at decision points, while Power SMC \citep{azizi2026power} shows how far a prefix-optimal particle backbone can go when control enters primarily through the proposal. 
\mz{APPS shares the particle perspective but locates its primary 
contribution at a different point in the algorithm: not in the proposal, but in \emph{selection-time future-value guidance}.  
The proposal remains simple and stable; what changes is how 
finite population mass is redistributed at boundaries, informed by 
estimated downstream promise.  This makes the two approaches 
naturally complementary: future-value selection potentials could in 
principle be combined with richer proposals.}

\paragraph{Amortized future value.}
Rollouts make the hidden future of a prefix momentarily visible, but only at substantial cost. We therefore compress this lookahead signal into a lightweight boundary-level head. In this way, APPS need not repeatedly simulate the future in order to sense it: a cheap predictor carries the same downstream signal into selection, while the base language model remains unchanged.

\paragraph{Contributions.}
In summary, we contribute:
\begin{itemize}
\item \textbf{APPS.} A particle-based, verifier-free decoding algorithm that approximates power distributions with a finite inference budget through blockwise proposal-corrected weighting and adaptive resampling.
\item \textbf{Selection-time future value.} A principled use of short-horizon rollouts to estimate the suffix normalizer $z_t$ and inject it only at resampling, refining how finite population mass is redistributed while leaving the proposal-corrected propagation law unchanged.
\item \textbf{Finite-budget population allocation.} A particle-based inference view in which resampling also governs where computation continues to flow over time, coupling effectiveness and efficiency within the same selection law.
\item \textbf{Learned future-value distillation.} A lightweight boundary-level head that amortizes the rollout-based selection potential from hidden states, reducing online rollout cost while preserving the same value-aware selection principle.
\item \textbf{Empirical accuracy--runtime trade-offs.} Across challenging reasoning benchmarks, APPS improves the accuracy--runtime trade-off of training-free decoding, achieving higher accuracy at substantially lower runtime, with both rollout-based and learned future-value selection further enhancing performance.
\end{itemize}

\section{Preliminaries}
\label{sec:prelim}

\paragraph{A distributional view of decoding.}
A pretrained language model does not merely \emph{score} text; it defines a distribution over possible continuations, and decoding is the rule by which we move through that distribution. Fix a prompt $x$, and let $y$ denote a full completion. The model assigns an autoregressive likelihood $p(y\mid x)$, which we may regard as the native plausibility of a trajectory under the base model. Our aim is to change the \emph{walk} without changing the \emph{landscape}: we keep $p$ fixed and modify only the inference procedure, so that trajectories the model already considers plausible become easier to recover under finite computation.

\paragraph{The sequence-level power target.}
To bias decoding toward what the model already believes, we sharpen its distribution at the level of whole trajectories:
\begin{equation}
p^{(\alpha)}(y\mid x) \;\propto\; p(y\mid x)^{\alpha},
\qquad \alpha>1.
\label{eq:power}
\end{equation}
When correct solutions exist as low-probability but genuine modes under $p$, the power target makes them easier to recover by concentrating probability mass around higher-likelihood trajectories. This differs from low-temperature decoding, which sharpens only local conditionals; sequence-level power sampling is global, and therefore depends not only on how good a move looks now, but also on what that move preserves downstream.

\paragraph{Future value at block boundaries.}
Because our method operates in \emph{blocks}, we work at block granularity. Fix a block size $B$ and write a completion as a sequence of blocks $b_{1:J}$. Let $\pi^{(\alpha)}(b_{1:J}\mid x)$ denote the induced sequence-level power target, and for a boundary $j$, let $\pi_j^{(\alpha)}(\cdot\mid x,b_{<j})$ denote the corresponding next-block conditional. Then
\begin{equation}
\pi_j^{(\alpha)}(b_j\mid x,b_{<j})
\;\propto\;
p(b_j\mid x,b_{<j})^\alpha\, z_{j+1}(x,b_{1:j}),
\label{eq:block-conditional}
\end{equation}
where $z_{j+1}(x,b_{1:j})$ is the remaining power-weighted mass of all suffixes after appending $b_j$. This suffix-value term is the exact measure of \emph{future value}: it quantifies how much power-weighted continuation mass remains reachable after committing to the current block. Thus, local power sharpening is not the whole story. The correct boundary law is tilted by what the current choice still keeps alive.

\paragraph{Particle approximation under finite computation.}
The difficulty is that $z_{j+1}(x,b_{1:j})$ is intractable to compute exactly. A natural approximation strategy is therefore population-based: maintain a set of partial trajectories, advance them forward, update their weights, and occasionally resample so that computation remains focused on prefixes that still matter. This yields a blockwise sequential Monte Carlo approximation to the power target. The main failure mode is population collapse: after a few steps, most of the mass may concentrate on only a few particles, leaving the rest effectively irrelevant. We monitor this using effective sample size (ESS). If $\{\bar w^{(i)}\}_{i=1}^P$ are normalized particle weights, then
\begin{equation}
\mathrm{ESS} \;=\; \frac{1}{\sum_{i=1}^P (\bar w^{(i)})^2},
\label{eq:ess-prelim}
\end{equation}
which decreases as the population becomes more concentrated. When ESS falls below a threshold, we resample and reallocate computation toward prefixes carrying the bulk of the mass.

\paragraph{What APPS adds.}
Our method is a blockwise particle approximation to the power target in \eqref{eq:power}. The minimal sampler uses proposal-corrected blockwise weights, so that practical decoding proposals still represent the same sharpened sequence-level target. What remains missing is the future-value term in \eqref{eq:block-conditional}. APPS addresses this by introducing a \emph{selection potential}: a computable proxy for suffix value that enters only through selection, refining which prefixes survive and how offspring are allocated, while leaving proposal and propagation unchanged.

\section{Methodology: Auxiliary Particle Power Sampling}
\label{sec:method}

The minimal requirement for approximating the power target 
$\pi_\alpha(y\mid x)\propto p(y\mid x)^\alpha$ under finite compute is a 
population of partial trajectories whose weights reflect the sequence-level 
objective.  The central difficulty, however, is that prefix likelihood alone 
does not determine a prefix's value under $\pi_\alpha$: the correct 
boundary law also depends on the power-weighted mass of all continuations 
still reachable.  We begin by making this dependence precise, then describe 
the particle backbone that targets the power objective, and finally 
introduce the future-value selection potentials that form the main 
contribution of this work.

\subsection{The power target and its future-value factorization}
\label{sec:future-value-theorem}

Fix a prompt $x$, a finite block horizon $J$, and a countable block 
space.  A full completion is a sequence of blocks 
$b_{1:J}:=(b_1,\dots,b_J)$.  Define the unnormalized sequence-level 
power target
\begin{equation}
  \gamma_J(b_{1:J}\mid x)
  :=
  \prod_{k=1}^{J} p(b_k\mid x,b_{<k})^\alpha,
  \qquad
  \pi_J(b_{1:J}\mid x)
  :=
  \frac{\gamma_J(b_{1:J}\mid x)}{Z_J(x)},
  \label{eq:full-power-target}
\end{equation}
where $Z_J(x):=\sum_{b_{1:J}}\gamma_J(b_{1:J}\mid x)$.  For 
$j\in\{1,\dots,J\}$, define the prefix mass
\[
  \gamma_j(b_{1:j}\mid x)
  :=
  \prod_{k=1}^{j} p(b_k\mid x,b_{<k})^\alpha,
\]
and the suffix value
\[
  z_{j+1}(x,b_{1:j})
  :=
  \sum_{b_{j+1:J}}
  \prod_{k=j+1}^{J} p(b_k\mid x,b_{<k})^\alpha,
\]
with the convention $z_{J+1}\equiv 1$.  The following theorem isolates 
the exact role of future value at every block boundary.

\begin{theorem}[Future-value factorization]
\label{thm:future-value-factorization}
For every block boundary $j$, the $j$-prefix marginal of the power 
target satisfies
\begin{equation}
  \pi_{1:j}(b_{1:j}\mid x)
  =
  \frac{\gamma_j(b_{1:j}\mid x)\,z_{j+1}(x,b_{1:j})}{Z_J(x)}.
  \label{eq:prefix-marginal-factorization}
\end{equation}
Consequently, for every fixed prefix $b_{<j}$, the next-block 
conditional is
\begin{equation}
  \pi_j(b_j\mid x,b_{<j})
  \;=\;
  \frac{p(b_j\mid x,b_{<j})^\alpha\;z_{j+1}(x,b_{1:j})}
       {z_j(x,b_{<j})}\;
  \;\propto\;
  p(b_j\mid x,b_{<j})^\alpha\;z_{j+1}(x,b_{1:j}).
  \label{eq:next-block-conditional}
\end{equation}
\end{theorem}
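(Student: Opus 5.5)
The plan is a direct marginalization, followed by conditioning. For the prefix marginal, I start from the definition $\pi_{1:j}(b_{1:j}\mid x)=\sum_{b_{j+1:J}}\pi_J(b_{1:J}\mid x)$. Since the block space is countable and every term is nonnegative, the sums can be rearranged freely (Tonelli). Using \eqref{eq:full-power-target}, I split the product $\prod_{k=1}^J p(b_k\mid x,b_{<k})^\alpha$ into the factor with $k\le j$, which is $\gamma_j(b_{1:j}\mid x)$, and the factor with $k>j$. The first factor does not depend on the summation variables $b_{j+1:J}$, so it comes out of the sum. What remains is exactly the definition of $z_{j+1}(x,b_{1:j})$, and this gives \eqref{eq:prefix-marginal-factorization}. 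For $j=J$ the sum is empty, and the convention $z_{J+1}\equiv 1$ makes the formula agree with $\pi_J$ itself.

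Next I need a recursion for the suffix value, namely $z_j(x,b_{<j})=\sum_{b_j} p(b_j\mid x,b_{<j})^\alpha\, z_{j+1}(x,b_{1:j})$. This follows from the same splitting. I peel off the $k=j$ factor in the definition of $z_j$ and sum over $b_j$ outermost. The inner sum over $b_{j+1:J}$ is then $z_{j+1}(x,b_{1:j})$, again justified by nonnegativity. The same computation at $j=1$ gives $z_1(x)=Z_J(x)$.

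For the conditional, I write $\pi_j(b_j\mid x,b_{<j})=\pi_{1:j}(b_{1:j}\mid x)/\pi_{1:j-1}(b_{<j}\mid x)$ and apply the first part at levels $j$ and $j-1$. For $j=1$ the denominator is $1$, since $\gamma_0:=1$ and $z_1=Z_J$. The factors $Z_J(x)$ cancel, and $\gamma_j/\gamma_{j-1}=p(b_j\mid x,b_{<j})^\alpha$. What is left is $p(b_j\mid x,b_{<j})^\alpha z_{j+1}(x,b_{1:j})/z_j(x,b_{<j})$, which is \eqref{eq:next-block-conditional}. The proportionality statement follows because the denominator does not depend on $b_j$. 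The recursion above confirms that this expression sums to one over $b_j$, so it is the normalized conditional.

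The only delicate step is well-posedness, since the argument needs $0<Z_J(x)<\infty$ and $z_j(x,b_{<j})>0$. Each $p(\cdot\mid\cdot)$ is a probability distribution and $\alpha>1$, so $p^\alpha\le p$ and every suffix value is at most $1$. In particular $Z_J\le 1$, and all the rearranged sums converge. Positivity of $Z_J$ holds because some completion has positive probability. The conditional is only defined on prefixes with $\pi_{1:j-1}(b_{<j}\mid x)>0$, and on those $z_j(x,b_{<j})>0$ by the first part. I would state this restriction explicitly; beyond that, the proof is routine bookkeeping.
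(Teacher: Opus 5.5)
Your proposal is correct and follows the paper's proof essentially step for step. You marginalize over $b_{j+1:J}$, pull the prefix factor $\gamma_j$ out of the suffix sum to obtain $z_{j+1}$, and then divide the level-$j$ marginal by the level-$(j-1)$ marginal using $\gamma_j=\gamma_{j-1}\,p(b_j\mid x,b_{<j})^\alpha$. Your extra bookkeeping (the recursion $z_j=\sum_{b_j}p(b_j\mid x,b_{<j})^\alpha z_{j+1}$ confirming normalization, the $j=1$ case, and well-posedness via $p^\alpha\le p$) tightens details the paper leaves implicit without changing the route.
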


\begin{proof}
Appendix~\ref{app:future-value-proofs} gives the full derivation.  
The key step is to split the product $\gamma_J$ into a prefix factor 
$\gamma_j$ (constant w.r.t.\ the suffix sum) and a suffix factor 
whose marginal is $z_{j+1}$; dividing adjacent prefix marginals then 
yields \eqref{eq:next-block-conditional}.
\end{proof}

\paragraph{Interpretation.}
Equation~\eqref{eq:next-block-conditional} decomposes the true 
block-boundary law into two factors.  The first, 
$p(b_j\mid x,b_{<j})^\alpha$, is the local power-weighted preference 
for the next block: it is computable from logits alone.  The second, 
$z_{j+1}(x,b_{1:j})$, measures how much power-weighted continuation 
mass remains reachable after committing to that block.  Future value 
is therefore not an auxiliary heuristic: it is exactly the correction 
that turns local power weighting into the true boundary marginal.  
An algorithm that reweights by prefix likelihood alone implicitly sets 
$z_{j+1}\equiv\mathrm{const}$, which can be severely wrong when 
locally strong prefixes lead into regions of low continuation quality.

\paragraph{Finite-budget approximation.}
Theorem~\ref{thm:future-value-factorization} identifies the exact 
boundary law; the next result characterizes how well a finite particle 
population can approximate it.  Consider $P$ particles drawn i.i.d.\ 
from a proposal $q_j$ and reweighted by the self-normalized importance 
ratio $r(b_j) := \pi_j(b_j)/q_j(b_j)$.

\begin{theorem}[Bounded-test particle approximation]
\label{thm:bounded-test-snis}
For every bounded measurable $f$ with $\|f\|_\infty \le 1$, the 
self-normalized importance-sampling estimator 
$\hat\pi_j^P(f) := \sum_{i=1}^P \bar w_i f(X_i)$ satisfies
\begin{equation}
  \mathbb{E}\!\left[\,
    \left|\hat\pi_j^P(f) - \pi_j(f)\right|
  \right]
  \;\le\;
  \frac{4}{\sqrt{P}}\,
  \sqrt{1 + \chi^2(\pi_j \| q_j)}
  \;+\;
  \frac{8}{P}\,\chi^2(\pi_j \| q_j).
  \label{eq:bounded-test-bound}
\end{equation}
\end{theorem}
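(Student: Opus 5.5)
Write $w_i := r(X_i)$ with $X_i \stackrel{\text{iid}}{\sim} q_j$, so that $\mathbb{E}[w_i]=1$ and $\mathbb{E}[w_i^2] = 1+\chi^2(\pi_j\|q_j)=:\rho$. The estimator is $\hat\pi_j^P(f) = A/W$, where $A := \frac1P\sum_i w_i f(X_i)$ and $W := \frac1P\sum_i w_i$. Its unnormalized counterpart $A$ is unbiased for $\pi_j(f)$ and has variance at most $\rho/P$.

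The plan is the standard ratio decomposition. Since $|A/W|\le 1$ (as $|f|\le 1$ and the weights are nonnegative), we have
\[
\frac{A}{W}-\pi_j(f) \;=\; \bigl(A-\pi_j(f)\bigr) \;+\; \frac{A}{W}\,(1-W).
\]
This gives the pathwise bound
\[
\bigl|\hat\pi_j^P(f)-\pi_j(f)\bigr| \;\le\; |A-\pi_j(f)| + |1-W|.
\]
Taking expectations and applying Cauchy--Schwarz yields
\[
\mathbb{E}|A-\pi_j(f)| \le \sqrt{\operatorname{Var}(w f)/P} \le \sqrt{\rho/P}
\quad\text{and}\quad
\mathbb{E}|1-W| \le \sqrt{\chi^2/P} \le \sqrt{\rho/P}.
\]
This already gives $2\sqrt{\rho/P}$, which is within the stated $4\sqrt{\rho}/\sqrt{P}$. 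The extra $\frac{8}{P}\chi^2$ term is then pure slack.

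Because the stated bound is looser than what this argument gives, I would present the argument and then note that it is dominated by the right-hand side of \eqref{eq:bounded-test-bound}.

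If instead one wants the paper's particular form, reflecting a second-order expansion, the route is as follows. Write
\[
\frac{A}{W} - \pi_j(f) = \bigl(A-\pi_j(f)W\bigr) + \bigl(A-\pi_j(f)W\bigr)\frac{1-W}{W}.
\]
Bound the first term by $2\sqrt{\rho/P}$. Handle the second term by splitting on the event $\{W\ge 1/2\}$.
\begin{itemize}
\item On $\{W\ge 1/2\}$, the second term contributes at most $2\,\mathbb{E}|(A-\pi_j(f)W)(1-W)| \le 2\cdot 2\chi^2/P$ by Cauchy--Schwarz.
\item On $\{W<1/2\}$, use the trivial bound $2$ together with Chebyshev, $\Pr(W<1/2)\le 4\chi^2/P$, which gives $8\chi^2/P$.
\end{itemize}

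The only mildly delicate point is the degenerate case $W=0$, which requires defining the estimator there (say, as $0$). This is covered because the pathwise inequality still holds with $|1-W|=1$.
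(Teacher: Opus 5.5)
Your main argument is correct, and it takes a different and sharper route than the paper.

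\textbf{The paper's route.} The paper works with the same centered numerator you call $A-\pi_j(f)W$. It writes the error as $N/S$, with $N=\sum_i r_i\bigl(f(X_i)-\pi_j(f)\bigr)$ and $S=\sum_i r_i$, and bounds $\mathbb{E}|N|\le 2\sqrt{P(1+\chi^2)}$. It then splits on the event $E=\{S>P/2\}$:
\begin{itemize}
\item On $E$ it uses $1/S\le 2/P$, which is where the leading constant $4$ comes from.
\item On $E^c$ it uses the deterministic bound $|N|\le 2S$ together with Chebyshev, $\mathbb{P}(E^c)\le 4\chi^2/P$, which produces the $8\chi^2/P$ term.
\end{itemize}
So the stated form does not reflect a second-order expansion. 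It reflects a crude good-event/bad-event split.

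\textbf{Your route.} Your decomposition $\bigl(A-\pi_j(f)\bigr)+\frac{A}{W}(1-W)$ uses the fact that the self-normalized estimator is a convex combination of values of $f$. Hence $|A/W|\le 1$, and the normalization error enters linearly without ever dividing by $W$. This removes the event split and Chebyshev altogether. It gives $\sqrt{\rho/P}+\sqrt{\chi^2/P}\le 2\sqrt{\rho/P}$, which is half the paper's leading term and has no $\chi^2/P$ term. It also treats the degenerate case $W=0$ explicitly, which the paper's proof leaves implicit. The paper's route is the more generic ratio-estimator argument, but for this statement it is strictly weaker than yours.

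\textbf{Your optional second route.} This should be dropped or repaired. Cauchy--Schwarz on the cross term gives
\[
\mathbb{E}\bigl|(A-\pi_j(f)W)(1-W)\bigr|\le\sqrt{4\rho/P}\,\sqrt{\chi^2/P}=2\sqrt{\rho\,\chi^2}/P,
\]
not $2\chi^2/P$. Since $\rho=1+\chi^2$, this is of order $\sqrt{\chi^2}/P\gg\chi^2/P$ when $\chi^2$ is small.

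The route can be rescued by a case split on $\chi^2$:
\begin{itemize}
\item If $\chi^2\le P/4$, then $4\sqrt{\rho\chi^2}/P\le 2\sqrt{\rho/P}$.
\item Otherwise $8\chi^2/P>2$, which already exceeds the trivial bound on the error.
\end{itemize}
However, the expansion $1/W=1+(1-W)/W$ is unnecessary. Bounding $1/W\le 2$ directly on $\{W>1/2\}$, as the paper does, gives exactly the stated inequality.
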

\noindent\textit{Proof.} See Appendix~\ref{app:bounded-test-snis}.\medskip

\noindent
The key quantity controlling approximation quality is the 
$\chi^2$-divergence $\chi^2(\pi_j\|q_j)$.  When the proposal $q_j$ 
is close to the target $\pi_j$, the divergence is small and moderate 
$P$ suffices; when the two disagree, in particular when $z_{j+1}$ 
varies widely across blocks and the proposal ignores it, the 
divergence grows and more particles are needed.  This sets up the 
central question for Section~\ref{sec:selection-potentials}: can we 
reduce $\chi^2(\pi_j\|q_j)$ without changing the proposal, by 
instead modifying which particles survive at resampling?

\subsection{Blockwise particle backbone}
\label{sec:backbone}

The factorization above (Equation \ref{eq:next-block-conditional}) motivates a two-part approximation strategy: 
a particle population whose weights track the local power factor, and 
a selection mechanism that accounts for the future-value factor.  We 
describe the first part here and the second in 
Section~\ref{sec:selection-potentials}.

APPS maintains $P$ partial trajectories (particles), with time 
measured in blocks of $B$ tokens.  At stage $j$, each particle $i$ 
proposes a next block from a tractable proposal,
\[
  [b_j]^{(i)} \sim q_j(\cdot \mid x,[b_{<j}]^{(i)}),
\]
and receives the incremental log-weight
\begin{equation}
  \Delta \log w_j^{(i)}
  \;=\;
  \alpha \log p\!\bigl([b_j]^{(i)} \mid x,[b_{<j}]^{(i)}\bigr)
  \;-\;
  \log q_j\!\bigl([b_j]^{(i)} \mid x,[b_{<j}]^{(i)}\bigr).
  \label{eq:pps-inc-weight}
\end{equation}
This identity separates \emph{exploration} (the proposal) from 
\emph{representation} (the power target): both log-terms are computed 
from logits already produced during generation, and when the proposal 
uses truncation or temperature, $\log q_j$ is the renormalized 
likelihood of the sampled tokens.  A natural proposal aligns with the 
locally sharpened model (e.g., temperature $\tau < 1$), keeping the 
correction in \eqref{eq:pps-inc-weight} stable.

As noted in Corollary~\ref{cor:minimal-apps-is} 
(Appendix~\ref{app:base-apps}), the cumulative product of these 
incremental weights equals the standard importance ratio for 
$\gamma_J / q$, so the backbone already defines a valid blockwise 
importance-sampling construction for the sequence-level power 
target (even before any future-value correction is introduced).

\paragraph{Resampling and population control.}
After several blocks, particle mass typically concentrates: a few 
prefixes carry most of the weight while many contribute little.  APPS 
monitors this via effective sample size,
\begin{equation}
  \mathrm{ESS}
  \;=\;
  \frac{1}{\sum_{i=1}^{P}(\bar w^{(i)})^2},
  \label{eq:ess}
\end{equation}
and resamples when $\mathrm{ESS} < \kappa P$.  At resampling, 
descendant indices are drawn from the current selection scores 
(Section~\ref{sec:selection-potentials}), and a separate cumulative 
ancestry score is propagated through sampled parent indices for final 
particle selection; this keeps local resampling numerically stable 
while preserving a trajectory-level ranking across resampling events.  
Under \emph{dynamic allocation}, the active population size is 
additionally adapted via a boundary-level ambiguity score 
(Appendix~\ref{app:dynamic-allocation}).

\subsection{Future-value selection potentials}
\label{sec:selection-potentials}

The backbone of Section~\ref{sec:backbone} captures the local power 
factor $p(b_j\mid x,b_{<j})^\alpha$ through proposal-corrected 
weighting.  What it does \emph{not} capture is the future-value factor 
$z_{j+1}$ in the true boundary law 
\eqref{eq:next-block-conditional}.  At finite particle count, this 
omission can cause premature collapse: the population concentrates on 
prefixes that look strong locally but whose continuations are poor.

APPS addresses this by introducing a computable \emph{selection 
potential} $\psi_j^{(i)} \approx z_{j+1}(x, [b_{1:j}]^{(i)})$ that 
enters the algorithm \emph{only} at resampling.  At each boundary 
where resampling is triggered, we form augmented selection weights
\begin{equation}
  \log \tilde w_j^{(i)}
  \;=\;
  \log w_j^{(i)}
  \;+\;
  \eta\,\log\psi_j^{(i)},
  \label{eq:pps-apf-resample}
\end{equation}
where $\eta > 0$ controls the strength of the future-value signal.  
Selection probabilities and expected offspring are then
\begin{equation}
  \rho_j^{(i)}
  :=
  \frac{\tilde w_j^{(i)}}
       {\sum_{k=1}^{P_j}\tilde w_j^{(k)}},
  \qquad
  \mathbb{E}\!\left[N_j^{(i)} 
    \mid \{\tilde w_j^{(k)}\}\right]
  =
  P_j^{+}\,\rho_j^{(i)},
  \label{eq:pps-apf-offspring}
\end{equation}
where $N_j^{(i)}$ is the offspring count and $P_j^{+}$ is the 
post-resampling population size.  Thus future value enters the 
particle system through expected offspring allocation: prefixes with 
stronger downstream promise retain representation, while collapsed 
regions relinquish it.  Proposal and propagation remain unchanged; 
the selection potential acts only through resampling.

\paragraph{Why selection is the right injection point.}
In a finite particle system, future value matters only insofar as it 
changes which branches survive and how many descendants they leave.  
Modifying the \emph{proposal} to depend on $z_{j+1}$ would require 
evaluating the intractable suffix normalizer \emph{before} sampling 
each token: it would be too expensive.  Modifying 
\emph{post-hoc filtering} (e.g., Best-of-$N$) discards all 
intermediate structure.  Selection-time injection is the minimal 
intervention: it preserves the stable, local proposal while 
redirecting finite population mass toward prefixes whose futures 
remain globally promising.

\paragraph{Theoretical benefit: variance reduction from selection 
potentials.}
The following proposition formalizes the intuition that a selection 
potential correlated with the true suffix normalizer reduces the 
particle count needed for a given approximation quality.

\begin{proposition}[Variance reduction from future-value selection]
\label{prop:apf-variance-reduction}
Consider self-normalized importance sampling at block boundary $j$
with $P$ particles, proposal $q_j$, and target
$\pi_j(b_j) \propto p(b_j\mid x,b_{<j})^\alpha\,z_{j+1}(x,b_{1:j})$.
Suppose resampling uses augmented weights
$\tilde w^{(i)} \propto w^{(i)}\,\psi^{(i)}$ with a strictly positive
selection potential $\psi$, and define the effective proposal
\begin{equation}
  \tilde q_j(b_j)
  \;:=\;
  \frac{q_j(b_j)\,\psi(b_j)}
       {\sum_{b'} q_j(b')\,\psi(b')}.
  \label{eq:effective-proposal}
\end{equation}
Then the approximation error at this boundary is governed by
$\chi^2(\pi_j \| \tilde q_j)$ in place of
$\chi^2(\pi_j \| q_j)$, via Theorem~\ref{thm:bounded-test-snis}.
Writing $r(b_j) := \pi_j(b_j)/q_j(b_j)$, we have the exact identity:
\begin{equation}
  1 + \chi^2(\pi_j \| \tilde q_j)
  \;=\;
  \mathbb{E}_{q_j}[r(b_j)^2] \;-\; \operatorname{Cov}_{q_j}\!\!\left(\psi(b_j),\; \frac{r(b_j)^2}{\psi(b_j)}\right).
  \label{eq:chi2-identity}
\end{equation}
Therefore, $\chi^2(\pi_j \| \tilde q_j) < \chi^2(\pi_j \| q_j)$
if and only if
\begin{equation}
  \operatorname{Cov}_{q_j}\!\!\left(\psi(b_j),\; \frac{r(b_j)^2}{\psi(b_j)}\right) \;>\; 0.
  \label{eq:variance-reduction-condition}
\end{equation}
In the ideal limit where $\psi(b_j) \propto r(b_j)$, this covariance attains its theoretical maximum, perfectly canceling the variance term and reducing the $\chi^2$-divergence to $0$.
\end{proposition}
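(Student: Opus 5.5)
The proof is a direct computation with the $\chi^2$-divergence, plus a reduction to Theorem~\ref{thm:bounded-test-snis} for the first claim. Throughout, the block space is countable, and I assume $\mathbb{E}_{q_j}[\psi]<\infty$ and $\mathbb{E}_{q_j}[r^2/\psi]<\infty$ so that every quantity is finite. Note that $\tilde q_j$ is invariant under rescaling $\psi\mapsto c\psi$, and the exponent $\eta$ in \eqref{eq:pps-apf-resample} can be absorbed by replacing $\psi$ with $\psi^\eta$. So it suffices to treat $\tilde w\propto w\psi$ with a generic positive $\psi$.

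\emph{Step 1 (reduction).} Resampling particles drawn from $q_j$ in proportion to $\psi$ is, to first order, drawing from the tilted law $\tilde q_j$ in \eqref{eq:effective-proposal}; this is the standard auxiliary-particle-filter reading. The importance ratio that represents $\pi_j$ is then $\tilde r:=\pi_j/\tilde q_j=r\cdot\mathbb{E}_{q_j}[\psi]/\psi$. This is exactly the first-stage weight divided by the selection potential, up to a constant. Applying Theorem~\ref{thm:bounded-test-snis} with $\tilde q_j$ in place of $q_j$ gives the same bound with $\chi^2(\pi_j\|\tilde q_j)$. This step is an interpretive identification rather than a full analysis of the two-stage resampling randomness. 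I expect it to be the main point needing care: one must state precisely that the estimator is self-normalized importance sampling under $\tilde q_j$, and not claim more than that.

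\emph{Step 2 (identity).} Since $1+\chi^2(\pi\|q)=\sum_b \pi(b)^2/q(b)$, we have
\[
1+\chi^2(\pi_j\|\tilde q_j)=\sum_b \frac{\pi_j(b)^2\,\mathbb{E}_{q_j}[\psi]}{q_j(b)\psi(b)}=\mathbb{E}_{q_j}[\psi]\,\mathbb{E}_{q_j}\!\left[\frac{r^2}{\psi}\right].
\]
Because $\mathbb{E}_{q_j}[\psi\cdot r^2/\psi]=\mathbb{E}_{q_j}[r^2]$, the definition of covariance gives $\mathbb{E}_{q_j}[\psi]\,\mathbb{E}_{q_j}[r^2/\psi]=\mathbb{E}_{q_j}[r^2]-\operatorname{Cov}_{q_j}(\psi,r^2/\psi)$. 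This is \eqref{eq:chi2-identity}. Combined with $1+\chi^2(\pi_j\|q_j)=\mathbb{E}_{q_j}[r^2]$, subtracting the two expressions yields the iff statement \eqref{eq:variance-reduction-condition} immediately.

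\emph{Step 3 (ideal limit).} By Cauchy--Schwarz,
\[
\mathbb{E}_{q_j}[\psi]\,\mathbb{E}_{q_j}[r^2/\psi]\ \ge\ \bigl(\mathbb{E}_{q_j}[r]\bigr)^2=1,
\]
with equality iff $r/\sqrt{\psi}\propto\sqrt{\psi}$ $q_j$-a.s., that is, iff $\psi\propto r$. Hence $\operatorname{Cov}_{q_j}(\psi,r^2/\psi)\le \mathbb{E}_{q_j}[r^2]-1=\chi^2(\pi_j\|q_j)$, so this is the maximal covariance. It is attained exactly when $\psi\propto r$. In that case $\tilde q_j=q_jr/\mathbb{E}_{q_j}[r]=\pi_j$, and therefore $\chi^2(\pi_j\|\tilde q_j)=0$.
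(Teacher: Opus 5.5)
Your proposal is correct. It follows the paper for the identity and takes a different, stronger route for the maximality claim.

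\textbf{Step 2 matches the paper.} The paper also factors $1+\chi^2(\pi_j\|\tilde q_j)=\mathbb{E}_{q_j}[\psi]\,\mathbb{E}_{q_j}[r^2/\psi]$. It rewrites this product via $\operatorname{Cov}_{q_j}(U,W)=\mathbb{E}_{q_j}[UW]-\mathbb{E}_{q_j}[U]\,\mathbb{E}_{q_j}[W]$ with $UW=r^2$, and compares with the case $\psi\equiv 1$.

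\textbf{Step 3 is where you genuinely differ.} The paper only observes that $\psi\propto r$ makes $r^2/\psi\propto\psi$, says the correlation is then ``maximized asymptotically,'' and checks directly that $\tilde q_j=\pi_j$. It never shows that no larger covariance is possible. Your Cauchy--Schwarz bound $\mathbb{E}_{q_j}[\psi]\,\mathbb{E}_{q_j}[r^2/\psi]\ge(\mathbb{E}_{q_j}[r])^2=1$ supplies exactly that. It gives $\operatorname{Cov}_{q_j}(\psi,r^2/\psi)\le\chi^2(\pi_j\|q_j)$ for every admissible $\psi$, with equality iff $\psi\propto r$ $q_j$-a.s. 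So ``theoretical maximum'' gets an explicit value and an equality characterization, not just one example that reaches $\chi^2=0$.

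If $r$ vanished on part of the support of $q_j$, no strictly positive $\psi$ could be proportional to $r$. The bound would then only be approached, e.g.\ by $\psi=r+\epsilon$. That does not happen here, since $\pi_j>0$ everywhere (softmax probabilities and $z_{j+1}$ are positive).

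\textbf{Caveat on Step 1.} The paper's proof offers no argument for the ``governed by $\chi^2(\pi_j\|\tilde q_j)$'' claim, so you are right to flag it as interpretive. Be aware of two things about your reading, SNIS under $\tilde q_j$ with weights $\pi_j/\tilde q_j\propto r/\psi$:
\begin{itemize}
\item It is the \emph{corrected} auxiliary filter. The paper's appendix concedes that APPS resamples by $w\psi^\eta$ with no $1/\psi$ correction.
\item Resampling by $w\psi$ rather than by $\psi$ tilts the particles toward $q_jw\psi$, not $q_j\psi$. So your sentence about resampling ``in proportion to $\psi$'' is not quite the mechanism in the statement.
\end{itemize}

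\textbf{Caveat on integrability.} Your hypotheses should also include $\mathbb{E}_{q_j}[r^2]<\infty$. On a countable space one can have $\mathbb{E}_{q_j}[r]=1$ and $\mathbb{E}_{q_j}[r^2]=\infty$. Then $\psi=r$ satisfies both of your finiteness conditions, while the right-hand side of the identity becomes $\infty-\infty$.
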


\paragraph{Interpretation.}
The proposition provides a direct mathematical mechanism by which future-value selection \emph{reduces the effective particle requirement}. The bound in Theorem~\ref{thm:bounded-test-snis} scales as $O(P^{-1/2}\sqrt{1+\chi^2})$: replacing the baseline proposal with the augmented effective proposal reduces this penalty by exactly the statistical covariance between the injected potential $\psi$ and the residual discrepancy $r^2/\psi$.

Intuitively, if the potential $\psi$ successfully identifies prefixes with large missing future value ($r$), both terms in the covariance move together: the assigned potential $\psi$ is high, and the remaining ratio $r^2/\psi$ (which scales with $r$) is also high, yielding a strictly positive covariance value. When $\psi$ perfectly matches the optimal trajectory mass $r(b_j) \propto p(b_j)^\alpha z_{j+1}(b_j)/q_j(b_j)$, this covariance perfectly counters the base variance, corresponding to zero-variance adapted importance sampling where $\chi^2(\pi_j\|\tilde q_j) = 0$. Practical estimators operate between these extremes: any surrogate—such as a short rollout—that scales proportionally with the true downstream value $z_{j+1}$ will induce a positive covariance and improve sample efficiency, avoiding the need for an exact internal value model.

\paragraph{Rollout selection potential.}
The first realization constructs $\psi_j^{(i)}$ directly from sampled 
continuations.  For each active particle $i$ at boundary $j$, we 
launch $R$ short lookahead rollouts of horizon $H$ from 
$[b_{1:j}]^{(i)}$, score each by its power-weighted suffix quality, 
and aggregate:
\begin{equation}
  \log\psi_j^{(i)}
  \;\approx\;
  \operatorname{LME}_{r=1}^{R}\; s_{j,r}^{(i)},
  \label{eq:rollout-apf}
\end{equation}
where $\operatorname{LME}$ denotes log-mean-exp and $s_{j,r}^{(i)}$ 
is the rollout suffix score.  This yields a \emph{critic-free} 
estimate of downstream promise: future value is inferred from the 
model's own continuations rather than from a separately trained value 
function.  By Proposition~\ref{prop:apf-variance-reduction}, any 
positive correlation between this estimate and the true $z_{j+1}$ 
reduces approximation error relative to the $\psi\equiv 1$ baseline.

The cost of rollout APF is confined to resampling boundaries:
\[
  O(P_j^{\mathrm{amb}}\,R\,H)
  \quad\text{per boundary},
\]
where $P_j^{\mathrm{amb}} \le P_j$ is the number of ambiguous 
particles selected for lookahead (under dynamic allocation, not all 
particles require evaluation).  The rollouts are short ($H \ll B$), 
independent across particles, and batched; they do not modify the 
proposal or the KV caches used for continued generation.

\begin{figure*}[t]
    \centering
    \includegraphics[width=\textwidth]{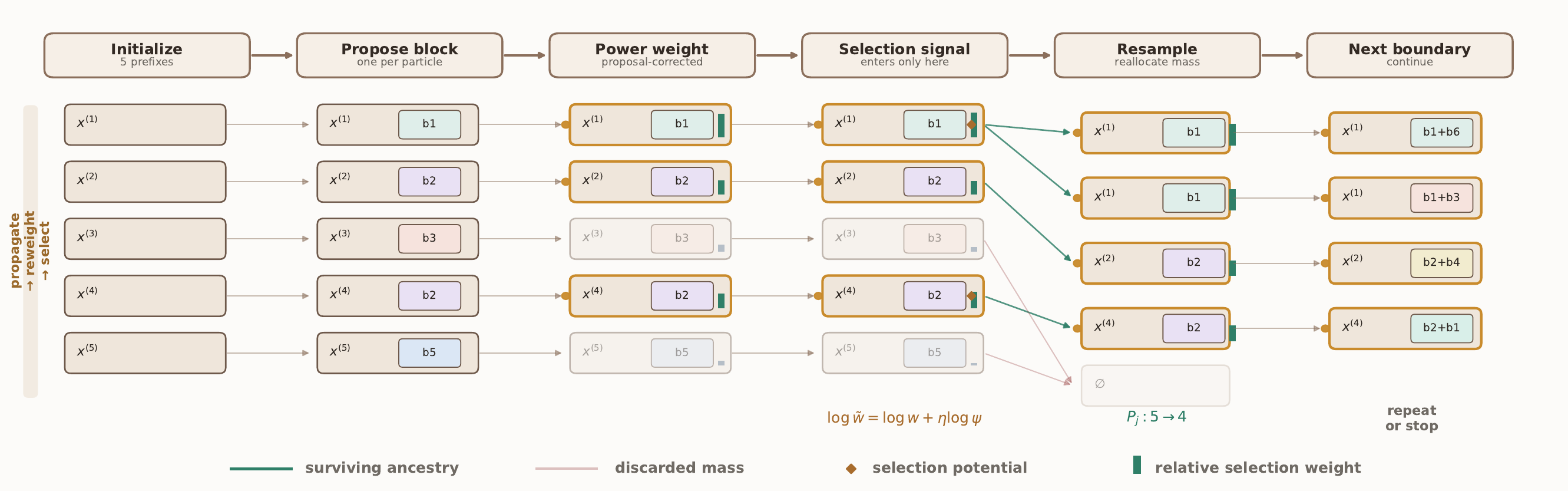}
    \caption{\textbf{Visual overview of APPS at a resampling boundary.}
    Five particle prefixes are propagated blockwise, 
    \textit{reweighted} under the sequence-level power target, and 
    then \textit{selected} under a finite particle budget.  
    \textit{Future-value selection potentials}, when active, refine 
    only the selection weights.  The example also illustrates 
    \textit{dynamic allocation}, with the active population shrinking 
    from $P_j=5$ to $P_{j+1}=4$ before decoding continues.}
    \label{fig:apps-decoding-visual}
\end{figure*}

\paragraph{Learned selection potential.}
The same boundary-level signal can be amortized.  Let 
$h_j^{(i)}\in\mathbb{R}^d$ denote the final-layer hidden state at 
the end of prefix $i$ at boundary $j$.  A lightweight MLP head 
$f_\theta:\mathbb{R}^d\to\mathbb{R}$ predicts
\begin{equation}
  \log\psi_j^{(i)} = f_\theta\!\left(h_j^{(i)}\right).
  \label{eq:learned-apf}
\end{equation}
The head is trained offline to reproduce the within-boundary 
competition induced by rollout APF: which prefixes retain 
representation and in what proportions 
(Appendix~\ref{app:learned-apf} gives training details).  At 
inference time, it replaces explicit lookahead with a single forward 
pass per boundary, reducing the per-boundary cost from 
$O(P_j^{\mathrm{amb}} R H)$ to $O(P_j)$.  Because the head is 
trained to predict the effective rollout signal rather than raw 
suffix values, it preserves the relative ranking that drives 
offspring allocation through \eqref{eq:pps-apf-offspring}, and 
Proposition~\ref{prop:apf-variance-reduction} applies whenever 
its predictions are positively correlated with $z_{j+1}$.

\paragraph{Cost summary.}
The total cost of APPS decomposes cleanly.  Propagation over block $j$ 
costs $O(P_j B)$.  Resampling without a selection potential adds 
negligible overhead.  With rollout or learned potentials, the 
additional cost per triggered boundary is 
$O(P_j^{\mathrm{amb}} R H)$ or $O(P_j)$, respectively.  In 
streaming mode, peak memory is dominated by the KV caches of the 
$P_{\max}$ active particles.  Dynamic allocation 
(Appendix~\ref{app:dynamic-allocation}) further reduces average 
compute by shrinking the active population at boundaries where 
selection has already converged.

\section{Experiments}
\label{sec:experiments}

We study three main questions: (Q1) how APPS compares with recent training-free power-sampling baselines under practical inference budgets; (Q2) when explicit future-value guidance improves over \emph{p-only APPS}, i.e., the base sampler with proposal-corrected power weighting only and no auxiliary future-value selection potential; and (Q3) whether a learned \rededit{selection potential} can approximate the rollout signal. We also study a complementary efficiency question: (Q4) how \emph{dynamic allocation} changes the realized particle population and the resulting runtime--accuracy behavior under a fixed maximum particle cap. Unless noted otherwise, all results are single-seed, full-benchmark evaluations with identical prompts and task evaluators.

\textbf{Benchmarks.}
We evaluate on three complementary reasoning benchmarks: \textbf{MATH500}~\citep{lightman2023letsverify}, scored by exact match on the extracted final answer; \textbf{HumanEval}~\citep{chen2021evaluating}, scored by functional correctness under the standard unit tests; and \textbf{GPQA-diamond}~\citep{rein2024gpqa}, scored by exact match on the extracted answer in $\{A,B,C,D\}$. We use \texttt{pass@1} throughout.

\textbf{Models and APF training data.}
Consistent with prior work, we study three 7B model families: \textit{Qwen2.5-Math-7B}, \textit{Qwen2.5-7B}, and \textit{DeepSeek-Math-7B}. For learned \rededit{selection potentials}, we train a lightweight APF head on 14{,}468 rollout-labeled prompt--answer pairs from a mixed reasoning corpus: 45\% BBH \citep{bhh}, 45\% GSM8K \citep{gsm8k}, and 10\% MATH \citep{mathdataset}, with the MATH portion drawn from the full corpus but excluding MATH500 \citep{lightman2023letsverify}. This yields broad task coverage across reasoning domains while keeping training data disjoint from evaluation. Table~\ref{tab:qwen_math_main} reports completed runs under a common protocol, including \emph{p-only APPS}, APPS with rollout \rededit{selection potential}, and learned \rededit{selection potential} where available.

\textbf{Baselines and protocol.}
We compare against the training-free baselines used in recent power-sampling work: standard decoding, low-temperature decoding, Best-of-$N$ \citep{brown2024large}, MH-MCMC power sampling \citep{karan2025reasoning}, and scalable power sampling (SPS) \citep{ji2026scalable}. As a post-training reference, we report the GRPO numbers from \citet{ji2026scalable}, based on the checkpoint of \citet{shao2025spurious}. For published baselines, we retain the original protocols and numbers.

We additionally show Power SMC \citep{azizi2026power} as reference points in the runtime plots where comparable results are available, but omit it from Table~\ref{tab:qwen_math_main} to keep the main comparison on a uniform grid: the published Power SMC results are concentrated on MATH500 and do not report a matching full HumanEval/GPQA grid for the models and settings evaluated here.

For APPS, we fix a dataset-specific block schedule across all models, variants, and particle budgets: $B=192$ for \textbf{MATH500}, $B=16$ for \textbf{HumanEval}, and $B=64$ for \textbf{GPQA}. This keeps APPS in a comparable practical compute regime to prior power-sampling baselines while reflecting that APPS updates its particle population at block boundaries. Unless noted otherwise, APPS runs use \emph{dynamic allocation}, with $\alpha=4$, $T_{\max}=3072$, EOS termination, particle budgets $P\in\{8,16,32\}$, and a fixed APF strength $\eta=0.5$; $P$ denotes the particle-budget cap. Rollout APF uses two short rollouts per APF call, evaluated only at selected resampling events. When a baseline is reported under a different budget or decoding setting, we retain the published number and mark the mismatch explicitly.

\textbf{APPS variants and ablation structure.}
\mz{We treat the base particle sampler with proposal-corrected power 
weighting only (\textbf{p-only}) as the ablation baseline: it 
validates that the blockwise SMC backbone targets the correct 
sequence-level objective, but does not use any future-value signal 
at selection time. The two future-value variants constitute the 
main experimental conditions: \textbf{APPS + rollout selection 
potential}, which adds short-rollout downstream-value estimates at 
resampling boundaries, and \textbf{APPS + learned selection 
potential}, which replaces explicit lookahead with a lightweight MLP 
head evaluated at the same boundaries. Comparing the future-value 
variants against p-only isolates the effect of the selection 
potential; comparing rollout against learned isolates the effect of 
amortization. Across all three conditions, the base model, prompts, 
evaluators, and decoding setup are fixed; only the resampling weights 
change.}

\subsection{Main results across model families}
\label{sec:main-results-qwen}

Table~\ref{tab:qwen_math_main} summarizes the validated fixed-$P=32$ results under the protocol above and gives the main empirical picture for \textbf{Q1}--\textbf{Q3}.  It reports a common particle-cap slice for all APPS variants, matching the selected operating points highlighted in Figure~\ref{fig:qwen_math_runtime_triptych}.  This fixed-cap view avoids selecting different APPS runs at different values of $P$, and makes the comparison between p-only, rollout APF, and learned APF direct.

\begin{figure*}[t]
  \centering
  \includegraphics[width=\textwidth]{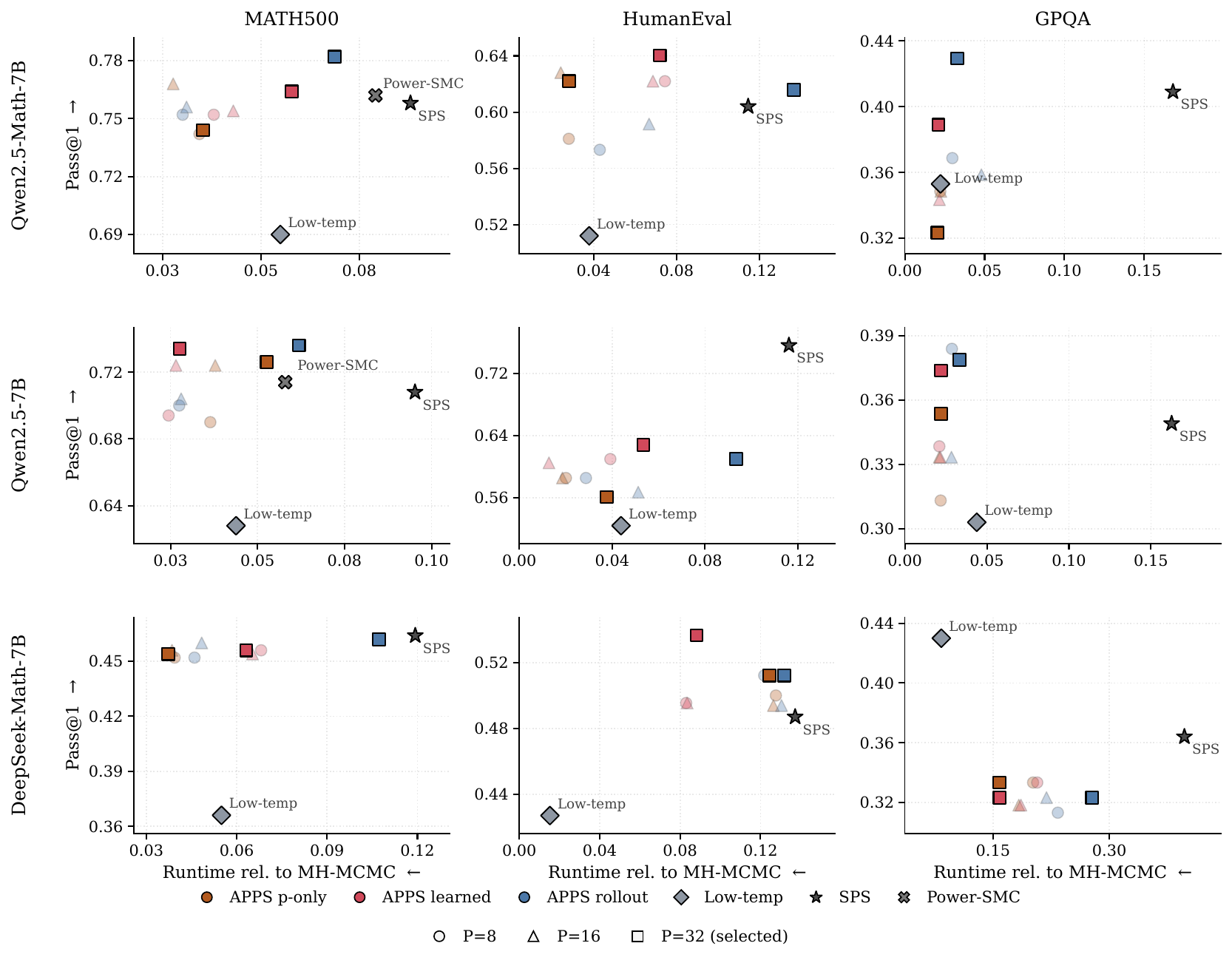}
\caption{\textbf{Runtime--accuracy frontiers across three 7B models.}
Each point shows the highest-\texttt{pass@1} valid run within a method family at particle count $P \in \{8,16,32\}$; square markers denote the selected $P=32$ operating points reported in Table~\ref{tab:qwen_math_main}.
The plot is read as a Pareto-style comparison, with runtime normalized by a local MH-MCMC reference and preferred directions indicated by the axis arrows ($\leftarrow$ lower wall-clock, $\uparrow$ higher \texttt{pass@1}).
Across models, \textbf{APPS (p-only)} is usually the fastest APPS variant, while \textbf{APPS (rollout APF)} often delivers the strongest accuracy, most clearly on MATH500 and in the strongest GPQA runs.
At the same time, \textbf{APPS (learned APF)} frequently tracks rollout closely while remaining more runtime-efficient, making it a practical proxy for rollout guidance on HumanEval and GPQA.
With value-guided selection, dynamic allocation, and finite-sample resampling, increasing the particle cap $P$ need not monotonically improve the runtime--accuracy operating point; the best empirical point can occur below the largest budget.}

\label{fig:qwen_math_runtime_triptych}
\end{figure*}

For \textbf{Q1}, APPS exhibits a strong training-free accuracy--runtime profile across the three model families.  In the fixed-$P=32$ table, APPS attains the best training-free result on all three Qwen2.5-Math-7B benchmarks, on MATH500 and GPQA for Qwen2.5-7B, and on HumanEval for DeepSeek-Math-7B.  The remaining slices are led by SPS or low-temperature decoding, but the runtime frontiers in Figure~\ref{fig:qwen_math_runtime_triptych} show that APPS consistently occupies favorable Pareto regions: its operating points combine high \texttt{pass@1} with small MH-MCMC-normalized wall-clock cost.  Within APPS, p-only provides the simplest low-overhead variant, while APF-guided variants often supply the strongest accuracy.  Thus, APPS is not only competitive at a fixed particle budget; it provides practical training-free operating points that improve the accuracy--runtime frontier in many tested settings.

For \textbf{Q2}, future-value guidance improves finite-particle selection, but the size of the gain depends on the model and task.  Rollout APF gives the strongest APPS accuracy on several reasoning-heavy slices, including Qwen2.5-Math-7B on MATH500 and GPQA, Qwen2.5-7B on MATH500 and GPQA, and DeepSeek-Math-7B on MATH500.  Learned APF also improves over p-only in important settings, most clearly on HumanEval for Qwen2.5-Math-7B and DeepSeek-Math-7B, and remains competitive on many MATH500 and GPQA slices.  These results support the central premise of APPS: when local likelihood is an imperfect proxy for downstream correctness, injecting future-value information at resampling can improve the finite-particle approximation.

For \textbf{Q3}, learned APF is best interpreted as an amortized proxy for rollout guidance, not as a uniformly stronger oracle.  It often tracks rollout APF closely while avoiding repeated online lookahead, and in Figure~\ref{fig:qwen_math_runtime_triptych} occupies favorable runtime--accuracy regions, especially on HumanEval and GPQA.  This indicates that the learned selection potential captures a substantial part of the rollout signal at lower online cost.  However, rollout APF still reaches the highest APPS accuracy in several MATH500 and GPQA settings, showing that explicit online estimates can remain valuable when their finite-horizon signal is well aligned with final correctness.

Taken together, the three model families suggest a task-dependent but consistent hierarchy.  P-only anchors the speed-oriented end of the APPS family and remains highly competitive when inference compute is constrained.  Rollout APF can deliver the highest accuracy, especially on MATH500 and in the strongest GPQA runs, but its gains are less uniform because it relies on finite-sample, finite-horizon estimates evaluated only at selected resampling events. Learned APF is generally the more practical future-value mechanism: it retains much of the benefit of rollout guidance while reducing online cost and exposure to rollout noise. The DeepSeek results are more mixed than the Qwen results.  One likely factor is completion length: on GPQA, DeepSeek outputs are much shorter than Qwen outputs, with a median length of only 27 characters versus roughly 1.5k for Qwen2.5-7B and 1.8k for Qwen2.5-Math-7B, leaving limited room for branching and resampling to affect the final answer.

Finally, \textbf{Q4} provides a complementary view of particle allocation and efficiency. 
In Figure~\ref{fig:qwen_math_runtime_triptych}, APPS runs use dynamic allocation by default and occupy favorable realized-cost regions of the runtime--accuracy frontier: the methods remain competitive in \texttt{pass@1} while operating at small MH-MCMC-normalized wall-clock cost. 
The same plot also shows that increasing the particle cap $P$ does not always monotonically improve the observed operating point, consistent with an adaptive finite-particle regime in which the cap sets available capacity but the realized active population $P_j$ determines how much computation is actually spent. 
This makes the runtime frontier a realized-cost frontier: the statistical target and proposal-corrected weighting remain fixed, while dynamic allocation determines how much of the particle cap is actually used across generation blocks. 
Appendix Figure~\ref{fig:rollout_learned_dynamic_tradeoff} further probes this behavior for APF-guided variants by comparing learned APF with dynamic allocation against rollout APF with and without dynamic allocation.


\newcommand{\tcell}[1]{\multicolumn{1}{c}{#1}}

\begin{table*}[t]
\centering
\caption{\textbf{Training-free sampling results across three 7B models at fixed particle budget $P=32$.}
We compare Qwen2.5-Math-7B, Qwen2.5-7B, and DeepSeek-Math-7B on MATH500, HumanEval, and GPQA.
Best and second-best \emph{training-free} entries within each model--dataset pair are highlighted; APPS rows correspond to the $P=32$ operating points shown in the runtime--accuracy figures.}
\label{tab:qwen_math_main}

\scriptsize
\setlength{\tabcolsep}{6pt}
\renewcommand{\arraystretch}{1.10}

\begin{threeparttable}
\begin{tabular}{
l
S[table-format=1.3]
S[table-format=1.3]
S[table-format=1.3]
}
\toprule
\textbf{Method} & {\textbf{MATH500}} & {\textbf{HumanEval}} & {\textbf{GPQA}} \\
\midrule

\rowcolor{modelbg}
\multicolumn{4}{l}{\textbf{Qwen2.5-Math-7B}} \\
Base decoding \citep{ji2026scalable} & 0.496 & 0.329 & 0.278 \\
Low-temp ($\tau=1/\alpha$) \citep{ji2026scalable} & 0.690 & 0.512 & 0.353 \\
Best-of-$N$ ($N=32$) \citep{ji2026scalable} & 0.684 & 0.512 & 0.343 \\
MH-MCMC power sampling \citep{karan2025reasoning} & 0.748 & 0.573 & 0.389 \\
Scalable power sampling (SPS) \citep{ji2026scalable} & 0.758 & 0.604 & \tcell{\second{0.409}} \\
\cmidrule(lr){1-4}
\textbf{APPS} (\textit{rollout} APF) & \tcell{\best{0.782}} & 0.616 & \tcell{\best{0.429}} \\
\textbf{APPS} (\textit{learned} APF) & \tcell{\second{0.764}} & \tcell{\best{0.640}} & 0.389 \\
\textbf{APPS} (p-only) & 0.744 & \tcell{\second{0.622}} & 0.323 \\
\cmidrule(lr){1-4}
GRPO (MATH) \citep{shao2025spurious}
& \tcell{0.785\grpobelow{0.3}}
& \tcell{0.537\grpoabove{10.3}}
& \tcell{0.399\grpoabove{3.0}} \\

\midrule

\rowcolor{modelbg}
\multicolumn{4}{l}{\textbf{Qwen2.5-7B}} \\
Base decoding \citep{ji2026scalable} & 0.498 & 0.329 & 0.278 \\
Low-temp ($\tau=1/\alpha$) \citep{ji2026scalable} & 0.628 & 0.524 & 0.303 \\
Best-of-$N$ ($N=32$) \citep{ji2026scalable} & 0.650 & 0.609 & 0.282 \\
MH-MCMC power sampling \citep{karan2025reasoning} & 0.706 & 0.622 & 0.318 \\
Scalable power sampling (SPS) \citep{ji2026scalable} & 0.708 & \tcell{\best{0.756}} & 0.349 \\
\cmidrule(lr){1-4}
\textbf{APPS} (\textit{rollout} APF) & \tcell{\best{0.736}} & 0.610 & \tcell{\best{0.379}} \\
\textbf{APPS} (\textit{learned} APF) & \tcell{\second{0.734}} & \tcell{\second{0.628}} & \tcell{\second{0.374}} \\
\textbf{APPS} (p-only) & 0.726 & 0.561 & 0.354 \\
\cmidrule(lr){1-4}
GRPO (MATH) \citep{shao2025spurious}
& \tcell{0.740\grpobelow{0.4}}
& \tcell{0.561\grpoabove{19.5}}
& \tcell{0.354\grpoabove{2.5}} \\

\midrule

\rowcolor{modelbg}
\multicolumn{4}{l}{\textbf{DeepSeek-Math-7B}} \\
Base decoding \citep{ji2026scalable} & 0.362 & 0.415 & 0.333 \\
Low-temp ($\tau=1/\alpha$) \citep{ji2026scalable} & 0.366 & 0.427 & \tcell{\best{0.430}} \\
Best-of-$N$ ($N=32$) \citep{ji2026scalable} & 0.420 & 0.433 & 0.338 \\
MH-MCMC power sampling \citep{karan2025reasoning} & 0.424 & 0.470 & 0.345 \\
Scalable power sampling (SPS) \citep{ji2026scalable} & \tcell{\best{0.464}} & 0.487 & \tcell{\second{0.364}} \\
\cmidrule(lr){1-4}
\textbf{APPS} (\textit{rollout} APF) & \tcell{\second{0.462}} & \tcell{\second{0.512}} & 0.323 \\
\textbf{APPS} (\textit{learned} APF) & 0.456 & \tcell{\best{0.537}} & 0.323 \\
\textbf{APPS} (p-only) & 0.454 & \tcell{\second{0.512}} & 0.333 \\
\cmidrule(lr){1-4}
GRPO (MATH) \citep{shao2025spurious}
& \tcell{0.492\grpobelow{2.8}}
& \tcell{0.524\grpoabove{1.3}}
& \tcell{0.333\grpoabove{9.7}} \\

\bottomrule
\end{tabular}

\begin{tablenotes}[flushleft]
\footnotesize
\item Highlighting is over \emph{training-free} methods only.
\item APPS rows use the selected $P=32$ operating points from Figure~\ref{fig:qwen_math_runtime_triptych}.
\item GRPO is a post-training reference; each annotation reports the absolute \texttt{pass@1} difference (in percentage points) between GRPO and the best training-free method in that block, with the arrow indicating whether the best training-free method is above or below GRPO.
\end{tablenotes}
\end{threeparttable}
\end{table*}

\section{Conclusion}

We introduced APPS, a training-free framework for particle power sampling that brings post-training-style gains closer to base-decoding efficiency. By separating proposal-corrected propagation from selection-time future-value guidance, APPS yields a strong minimal sampler that already operates near the speed of base decoding, while rollout and learned selection potentials further improve how finite inference-time compute is deployed. Empirically, this places APPS in the narrow but important regime between decoding and post-training: substantially more accurate than standard decoding, often competitive with or stronger than recent training-free baselines, and in selected settings surprisingly close to post-trained performance without modifying model weights. More broadly, our results suggest that some of the benefits often attributed to post-training can instead be recovered at inference time, provided the decoder is allowed to maintain, weight, and selectively preserve multiple possible futures.

\newpage
\bibliographystyle{plainnat}
\bibliography{main}

\newpage
\appendix
\section{Extended Related Work}
\label{sec:related}

\paragraph{Post training and verifier based reasoning.}
A major driver of recent reasoning gains is post training with preference \citep{ouyang2022training} or reward optimization \citep{grpo,huang2025tree,yu2025dapo}.
In parallel, verifier based pipelines improve reliability by sampling multiple solutions and selecting using an outcome reward model or a process reward model, with the strongest variants relying on step level supervision and careful reward design.
These approaches can be highly effective, but they introduce training pipelines and often depend on additional supervision signals or verifier models.

\paragraph{Training free inference time search for reasoning.}
A complementary line of work improves reasoning without updating the base model by changing how we sample.
Local sharpening methods \citep{karan2025reasoning} such as low temperature sampling are nearly free but remain myopic, while selection based methods such as Best-of-N \citep{brown2024large,sun2024fast} improve by exploiting sample diversity but still score candidates using the base model.
More structured inference time search methods explore multiple trajectories through sampling and selection \citep{song2025rekg,feng2024alphazero,xie2023s}.
This paper belongs to the training free direction, and targets sequence level sharpening rather than token level heuristics.

\paragraph{Power distributions and finite budget approximations.}
Power sampling \citep{karan2025reasoning} formalizes sequence level sharpening by targeting $\pi_\alpha(x) \propto p(x)^\alpha$.
Early practical realizations use Markov chain Monte Carlo updates over long trajectories, which can be statistically principled but typically incur large wall clock overhead due to repeated suffix regeneration.
Recent scalable approximations \citep{ji2026scalable} replace iterative mixing with explicit lookahead and rollout estimation of future dependent factors, improving finite budget behavior but shifting cost toward candidate wise fan out at decision points.
Closely related work, such as Power-SMC \citep{azizi2026power}, uses sequential Monte Carlo with resampling and cache reordering to achieve strong accuracy--latency trade-offs under large particle counts and shorter generation caps in latency-focused settings. Our method adopts the same particle perspective, but differs in where control enters inference: Power SMC centers the proposal, whereas APPS centers weighted selection under a finite population budget. In its minimal form, APPS already defines a strong blockwise particle sampler; richer \rededit{future-value selection potentials} then refine resampling without altering the underlying propagation law.

\paragraph{Twisted and value guided particle methods for LLMs.}
Several recent works \citep{zhao2024probabilistic,fengstep} bring twisted sequential Monte Carlo ideas into LLM reasoning, often by introducing stepwise potentials that encourage trajectories with higher expected downstream reward or verifier scores.
\cite{fengstep} applies twisted SMC to math reasoning by estimating expected future rewards of partial solutions and using this signal to focus sampling on promising candidates.
These methods highlight the importance of future value signals for long horizon reasoning.
Our approach uses a related future-aware principle, but differs in where the signal enters: propagation remains proposal-aware and stable, while future value acts only through resampling. This lets the same inference law range from a strong p-only backbone to more explicit rollout or learned selection-time value estimates.

\section{Algorithm}

\noindent
APPS implements the sequence-level power objective as a blockwise particle procedure. At each boundary, particles are propagated with a tractable proposal and assigned proposal-corrected power weights, giving a valid importance-weighted approximation of the sharpened target. Resampling then serves as the point where finite computation is redistributed across prefixes. In the p-only variant, this redistribution uses the proposal-corrected weights alone; in rollout and learned APF variants, the same selection step is augmented by an estimated future-value potential, while the propagation rule and target weighting remain unchanged.

\begin{algorithm}[h]
\colorbox{algobg}{%
\parbox{0.98\linewidth}{%
\caption{\textbf{Auxiliary Particle Power Sampling (APPS).}
A blockwise particle sampler for the sequence-level power objective. APPS propagates bounded prefixes, applies proposal-corrected power weighting, and redistributes particle mass by ESS-gated resampling. Rollout and learned APF variants augment only the selection weights with an estimated future-value potential.}
\label{alg:pps-apf}
\begin{algorithmic}[1]
\Require prompt $x_0$, base LM $p$, power $\alpha>1$, particle cap $P_{\max}$, block size $B$, proposal family $\{q_j(\cdot\mid\cdot)\}$
\Require APF mode $\in\{\texttt{none},\texttt{rollout},\texttt{learned}\}$; rollout budget $(R,H)$ if \texttt{rollout}
\Ensure final completion by weighted sampling or score-based selection

\State \structstep{Initialize}
\State Choose active population size $P_1 \leq P_{\max}$
\State Set $x^{(i)} \gets x_0$ and $\log w^{(i)} \gets 0$ for $i=1,\dots,P_1$
\State Set $j \gets 1$

\While{not terminated}

  \State \structstep{Propagate}
  \For{$i \gets 1$ to $P_j$}
    \State $x_{\mathrm{prev}}^{(i)} \gets x^{(i)}$
    \State Sample block $[b_j]^{(i)} \sim q_j(\cdot \mid x^{(i)})$
    \State $x^{(i)} \gets x^{(i)} \oplus [b_j]^{(i)}$
  \EndFor

  \State \structstep{Reweight}
  \For{$i \gets 1$ to $P_j$}
    \State $\log w^{(i)} \gets \log w^{(i)}
      + \alpha \log p([b_j]^{(i)} \mid x_{\mathrm{prev}}^{(i)})
      - \log q_j([b_j]^{(i)} \mid x_{\mathrm{prev}}^{(i)})$
  \EndFor

  \State \structstep{Score selection}
  \State Set $\log \tilde w^{(i)} \gets \log w^{(i)}$ for all $i=1,\dots,P_j$
  \If{APF mode $\neq \texttt{none}$ \textbf{and} resampling is imminent}
    \If{APF mode $= \texttt{rollout}$}
      \State Estimate $\log \psi^{(i)}$ using $R$ rollouts of horizon $H$ from $x^{(i)}$
    \ElsIf{APF mode $= \texttt{learned}$}
      \State Predict $\log \psi^{(i)}$ from $h_{\mathrm{last}}(x^{(i)})$
    \EndIf
    \State $\log \tilde w^{(i)} \gets \log \tilde w^{(i)} + \eta\,\log \psi^{(i)}$ for all $i=1,\dots,P_j$
  \EndIf

  \State \structstep{Allocate and resample}
  \State Set $P_{j+1} \leq P_{\max}$ from the boundary ambiguity score
  \State Compute $\mathrm{ESS}$ from $\{\log \tilde w^{(i)}\}_{i=1}^{P_j}$
  \If{$\mathrm{ESS} < \kappa P_j$}
    \State Sample ancestors $a_{1:P_{j+1}} \sim \textsc{Resample}(\propto \exp(\log \tilde w^{(i)}))$
    \State Optionally preserve an elite particle in the resampled set
    \State Set $x^{(i)} \gets x^{(a_i)}$ and $\log w^{(i)} \gets 0$ for $i=1,\dots,P_{j+1}$
  \EndIf

  \State $j \gets j+1$
\EndWhile

\State \Return final completion by sampling from $\exp(\log w)$ or selecting the best score
\end{algorithmic}
}}
\end{algorithm}

\newpage

\section{\rededit{Proofs for Section~\ref{sec:method}}}

\subsection{Proof of Theorem~\ref{thm:future-value-factorization} 
(Future-value factorization)}
\label{app:future-value-proofs}

\begin{proof}
\rededit{By definition,}
\[
\rededit{\pi_J(b_{1:J}\mid x)
=
\frac{1}{Z_J(x)}
\prod_{k=1}^J p(b_k\mid x,b_{<k})^\alpha.}
\]
\rededit{Fix a boundary $j\le J$. Then}
\[
\rededit{\pi_{1:j}(b_{1:j}\mid x)
=
\sum_{b_{j+1:J}} \pi_J(b_{1:J}\mid x)
=
\frac{1}{Z_J(x)}
\sum_{b_{j+1:J}}
\prod_{k=1}^J p(b_k\mid x,b_{<k})^\alpha.}
\]
\rededit{Split the product into prefix and suffix terms:}
\[
\rededit{\prod_{k=1}^J p(b_k\mid x,b_{<k})^\alpha
=
\left(\prod_{k=1}^j p(b_k\mid x,b_{<k})^\alpha\right)
\left(\prod_{k=j+1}^J p(b_k\mid x,b_{<k})^\alpha\right).}
\]
\rededit{The prefix factor is constant with respect to the sum over $b_{j+1:J}$, so}
\[
\rededit{\pi_{1:j}(b_{1:j}\mid x)
=
\frac{1}{Z_J(x)}
\left(\prod_{k=1}^j p(b_k\mid x,b_{<k})^\alpha\right)
\sum_{b_{j+1:J}}
\prod_{k=j+1}^J p(b_k\mid x,b_{<k})^\alpha
=
\frac{\gamma_j(b_{1:j}\mid x)\,z_{j+1}(x,b_{1:j})}{Z_J(x)},}
\]
\rededit{which proves \eqref{eq:prefix-marginal-factorization}. For the conditional law, divide the prefix marginal at level $j$ by the one at level $j-1$:}
\[
\rededit{\pi_j(b_j\mid x,b_{<j})
=
\frac{\pi_{1:j}(b_{1:j}\mid x)}{\pi_{1:j-1}(b_{<j}\mid x)}
=
\frac{\gamma_j(b_{1:j}\mid x)\,z_{j+1}(x,b_{1:j})}{\gamma_{j-1}(b_{<j}\mid x)\,z_j(x, b_{<j})}.}
\]
\rededit{Since \(\gamma_j(b_{1:j}\mid x)=\gamma_{j-1}(b_{<j}\mid x)\,p(b_j\mid x, b_{<j})^\alpha\), this gives \eqref{eq:next-block-conditional}.}
\end{proof}

\subsection{Proof of Theorem~\ref{thm:bounded-test-snis} 
(Bounded-test particle approximation)}
\label{app:bounded-test-snis}


\begin{proof}
To simplify notation, write $\pi:=\pi_j$ and $q:=q_j$.
Let
\[
r(x):=\frac{d\pi}{dq}(x).
\]
Then
\[
\mathbb E_q[r(X)] = 1,
\qquad
\mathbb E_q[r(X)^2] = 1 + \chi^2(\pi\|q).
\]

Let $X_1,\dots,X_P \overset{\text{i.i.d.}}{\sim} q$, set
\[
r_i := r(X_i),
\qquad
S := \sum_{i=1}^P r_i,
\qquad
\bar w_i := \frac{r_i}{S}.
\]
Fix any measurable $f$ with $\|f\|_\infty\le 1$, and write
\[
\mu := \pi(f) = \int f\,d\pi = \mathbb E_q[r(X)f(X)].
\]
The SNIS estimator is
\[
\hat\pi^P(f)
=
\sum_{i=1}^P \bar w_i f(X_i)
=
\frac{\sum_{i=1}^P r_i f(X_i)}{S}.
\]
Hence
\[
\hat\pi^P(f)-\mu
=
\frac{\sum_{i=1}^P r_i(f(X_i)-\mu)}{S}.
\]

Define
\[
Z_i := r_i(f(X_i)-\mu),
\qquad
N := \sum_{i=1}^P Z_i.
\]
Then
\[
\hat\pi^P(f)-\mu = \frac{N}{S}.
\]

First, $\mathbb E[Z_i]=0$, because
\[
\mathbb E[Z_i]
=
\mathbb E_q[r(X)(f(X)-\mu)]
=
\mathbb E_q[r(X)f(X)] - \mu\,\mathbb E_q[r(X)]
=
\mu-\mu
=
0.
\]
Also, since $\|f\|_\infty\le 1$, we have $|\mu|\le 1$, hence
$|f(X_i)-\mu|\le 2$. Therefore
\[
Z_i^2 = r_i^2(f(X_i)-\mu)^2 \le 4 r_i^2,
\]
and so
\[
\mathbb E[Z_i^2]
\le
4\mathbb E_q[r(X)^2]
=
4(1+\chi^2(\pi\|q)).
\]
Because the $Z_i$ are independent and mean zero,
\[
\mathbb E[N^2]
=
\sum_{i=1}^P \mathbb E[Z_i^2]
\le
4P(1+\chi^2(\pi\|q)).
\]
By Cauchy--Schwarz,
\[
\mathbb E|N|
\le
\sqrt{\mathbb E[N^2]}
\le
2\sqrt{P}\,\sqrt{1+\chi^2(\pi\|q)}.
\]

Next, since $S=\sum_{i=1}^P r_i$ and $\mathbb E_q[r]=1$,
\[
\mathbb E[S]=P.
\]
Also,
\[
\mathrm{Var}(r)=\mathbb E_q[r^2]-1=\chi^2(\pi\|q),
\]
so
\[
\mathrm{Var}(S)=P\,\chi^2(\pi\|q).
\]
Let
\[
E:=\{S>P/2\}.
\]
By Chebyshev's inequality,
\[
\mathbb P(E^c)
=
\mathbb P\!\left(S\le \frac{P}{2}\right)
=
\mathbb P\!\left(S-P\le -\frac{P}{2}\right)
\le
\frac{4\,\mathrm{Var}(S)}{P^2}
=
\frac{4\chi^2(\pi\|q)}{P}.
\]

We now split on $E$:
\[
\mathbb E\left|\hat\pi^P(f)-\mu\right|
=
\mathbb E\left[\left|\frac{N}{S}\right|\mathbf 1_E\right]
+
\mathbb E\left[\left|\frac{N}{S}\right|\mathbf 1_{E^c}\right].
\]
On $E$, we have $1/S \le 2/P$, hence
\[
\mathbb E\left[\left|\frac{N}{S}\right|\mathbf 1_E\right]
\le
\frac{2}{P}\mathbb E|N|
\le
\frac{4}{\sqrt P}\sqrt{1+\chi^2(\pi\|q)}.
\]
On $E^c$, we use the deterministic bound
\[
|N|
=
\left|
\sum_{i=1}^P r_i(f(X_i)-\mu)
\right|
\le
\sum_{i=1}^P r_i |f(X_i)-\mu|
\le
2\sum_{i=1}^P r_i
=
2S,
\]
which implies $|N/S|\le 2$. Therefore
\[
\mathbb E\left[\left|\frac{N}{S}\right|\mathbf 1_{E^c}\right]
\le
2\,\mathbb P(E^c)
\le
\frac{8\chi^2(\pi\|q)}{P}.
\]
Combining the two bounds proves \eqref{eq:bounded-test-bound}.
\end{proof}

\subsection{Proof of Proposition~\ref{prop:apf-variance-reduction} 
(Variance reduction from future-value selection)}
\label{app:proof-variance-reduction}

\begin{proof}
By definition of the $\chi^2$-divergence,
\[
  1 + \chi^2(\pi_j\|\tilde q_j)
  = \sum_{b_j}\frac{\pi_j(b_j)^2}{\tilde q_j(b_j)}.
\]
Let $C := \sum_{b'} q_j(b')\,\psi(b') = \mathbb{E}_{q_j}[\psi(b_j)]$ act as the normalizer constant. Substituting the definition of the effective proposal $\tilde q_j(b_j) = q_j(b_j)\,\psi(b_j) / C$ and the ratio $r(b_j) = \pi_j(b_j)/q_j(b_j)$, we obtain:
\begin{align*}
  1 + \chi^2(\pi_j\|\tilde q_j)
  &= \sum_{b_j}\frac{\pi_j(b_j)^2\,C}{q_j(b_j)\,\psi(b_j)} \\
  &= C\sum_{b_j} q_j(b_j)\,\frac{r(b_j)^2}{\psi(b_j)} \\
  &= \mathbb{E}_{q_j}[\psi(b_j)]\;\mathbb{E}_{q_j}\!\!\left[\frac{r(b_j)^2}{\psi(b_j)}\right].
\end{align*}
We now frame this product of expectations using the general definition of covariance. For any two random variables $U$ and $W$ under a probability distribution $q_j$, it holds that $\operatorname{Cov}_{q_j}(U, W) = \mathbb{E}_{q_j}[UW] - \mathbb{E}_{q_j}[U]\,\mathbb{E}_{q_j}[W]$. 

Setting $U := \psi(b_j)$ and $W := r(b_j)^2 / \psi(b_j)$, we observe that their product trivially simplifies inside the expectation to $UW = r(b_j)^2$. Therefore:
\[
  \operatorname{Cov}_{q_j}\!\!\left(\psi(b_j),\; \frac{r(b_j)^2}{\psi(b_j)}\right)
  \;=\; \mathbb{E}_{q_j}[r(b_j)^2] \;-\; \mathbb{E}_{q_j}[\psi(b_j)]\;\mathbb{E}_{q_j}\!\!\left[\frac{r(b_j)^2}{\psi(b_j)}\right].
\]
Rearranging this equation immediately yields Identity~\eqref{eq:chi2-identity}:
\[
   1 + \chi^2(\pi_j\|\tilde q_j) \;=\; \mathbb{E}_{q_j}[r(b_j)^2] \;-\; \operatorname{Cov}_{q_j}\!\!\left(\psi(b_j),\; \frac{r(b_j)^2}{\psi(b_j)}\right).
\]
In the baseline scenario without a future-value selection potential ($\psi \equiv 1$), the covariance term is trivially $0$ (since variance/covariance with a constant is zero), evaluating to exactly $1 + \chi^2(\pi_j\|q_j) = \mathbb{E}_{q_j}[r(b_j)^2]$. By substituting this definition into the identity, the difference between the divergences is precisely the isolated covariance:
\[
   \chi^2(\pi_j\|\tilde q_j) \;=\; \chi^2(\pi_j\|q_j) \;-\; \operatorname{Cov}_{q_j}\!\!\left(\psi(b_j),\; \frac{r(b_j)^2}{\psi(b_j)}\right).
\]
Hence, a strict variance reduction $\chi^2(\pi_j\|\tilde q_j) < \chi^2(\pi_j\|q_j)$ occurs if and only if the covariance is strictly positive.

Finally, in the ideal limit where $\psi(b_j) \propto r(b_j)$, we have $r(b_j) = k\psi(b_j)$ for some scalar constant $k$. Then $W = k^2\psi(b_j) \propto U$. The positive correlation is maximized asymptotically, and directly applying the substitution into $\tilde q_j$ provides $\tilde q_j(b_j) = k q_j(b_j)\psi(b_j) / (kC) = \pi_j(b_j)$, rendering the target and effective proposal equal and trivially giving $\chi^2(\pi_j\|\tilde q_j) = 0$.
\end{proof}

\subsection{Corollary~\ref{cor:minimal-apps-is}: Minimal APPS as 
proposal-corrected importance sampling}
\label{app:base-apps}
\begin{corollary}[\rededit{Minimal APPS as proposal-corrected blockwise importance sampling}]
\label{cor:minimal-apps-is}
\rededit{Consider a sequential block proposal}
\[
\rededit{q(b_{1:J}\mid x)
=
\prod_{j=1}^J q_j(b_j\mid x,b_{<j}).}
\]
\rededit{Then the cumulative importance weight produced by the proposal-corrected update}
\[
\rededit{\Delta \log w_j
=
\alpha \log p(b_j\mid x,b_{<j})
-
\log q_j(b_j\mid x,b_{<j})}
\]
\rededit{satisfies}
\begin{equation}
\rededit{W_J(b_{1:J})
:=
\prod_{j=1}^J
\frac{p(b_j\mid x,b_{<j})^\alpha}{q_j(b_j\mid x,b_{<j})}
=
\frac{\gamma_J(b_{1:J}\mid x)}{q(b_{1:J}\mid x)}.}
\label{eq:full-importance-ratio}
\end{equation}
\rededit{Hence the minimal APPS sampler defines the standard blockwise importance-sampling construction for the sequence-level power target \eqref{eq:full-power-target}; with standard ESS-gated resampling, this becomes the corresponding blockwise SMC approximation.}
\end{corollary}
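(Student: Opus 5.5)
The argument is a telescoping computation driven entirely by the autoregressive factorization of the proposal and by the definition of $\gamma_J$ in \eqref{eq:full-power-target}. We start from the zero initialization $\log w = 0$ and add the increments \eqref{eq:pps-inc-weight} over $j=1,\dots,J$. Exponentiating the resulting sum gives
\[
W_J(b_{1:J}) = \prod_{j=1}^J \frac{p(b_j\mid x,b_{<j})^\alpha}{q_j(b_j\mid x,b_{<j})},
\]
which is the left-hand side of \eqref{eq:full-importance-ratio}.

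Because the product is finite, we can split it into numerator and denominator products.
\begin{itemize}
\item The numerator $\prod_j p(b_j\mid x,b_{<j})^\alpha$ is $\gamma_J(b_{1:J}\mid x)$ by definition.
\item The denominator $\prod_j q_j(b_j\mid x,b_{<j})$ is $q(b_{1:J}\mid x)$ by the assumed sequential form of the proposal.
\end{itemize}
This proves the identity. The only point needing care is positivity: we restrict to $b_{1:J}$ with $q(b_{1:J}\mid x)>0$. We also need the support condition that $q_j(\cdot\mid x,b_{<j})>0$ wherever $p(\cdot\mid x,b_{<j})>0$. For truncated proposals, the target is then read as restricted to the support of $q$.

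For the ``hence'' part, we use the identity to conclude that $\mathbb{E}_q[W_J\,f] = \sum_{b_{1:J}} \gamma_J f = Z_J(x)\,\pi_J(f)$ for every bounded $f$. So $W_J$ is an unbiased unnormalized importance weight, and the self-normalized estimator with these weights targets $\pi_J$. This is exactly the setting of Theorem~\ref{thm:bounded-test-snis} applied to the full sequence.

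For the SMC claim, we argue stage by stage. The sequence $\gamma_j$ forms a family of intermediate targets with ratio $\gamma_j/\gamma_{j-1} = p(b_j\mid x,b_{<j})^\alpha$. The incremental weight is therefore the standard SMC incremental weight $\gamma_j/(\gamma_{j-1}q_j)$. It follows that ESS-gated multinomial resampling with a weight reset is the usual sequential importance resampling scheme for $\{\gamma_j\}$, whose final target is $\pi_J$.

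The main obstacle is not the algebra. It is stating precisely what ``corresponds'' means under resampling, since Corollary~\ref{cor:minimal-apps-is} then holds only in the standard unbiased or consistent sense rather than pathwise. We handle this by citing the generic SMC construction rather than proving convergence.
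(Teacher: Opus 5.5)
Your argument for the identity is exactly the paper's: sum the incremental log-weights, exponentiate, and identify the numerator product with $\gamma_J(b_{1:J}\mid x)$ by definition and the denominator product with $q(b_{1:J}\mid x)$ by the sequential factorization. Your extra remarks are correct and go beyond the paper, which states the ``hence'' clause without argument; they make that clause precise. These are the support condition ($q>0$ wherever $\gamma_J>0$, with truncated proposals targeting $\gamma_J$ restricted to $\mathrm{supp}(q)$), the unbiasedness $\mathbb{E}_q[W_J f]=Z_J(x)\,\pi_J(f)$, and the reading of each increment as the standard SMC weight $\gamma_j/(\gamma_{j-1}q_j)$ for intermediate targets $\gamma_j$ with final target $\pi_J$.
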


\begin{proof}
Summing the incremental log-weights gives
\[
\sum_{j=1}^J \Delta\log w_j
=
\sum_{j=1}^J
\log
\frac{p(b_j\mid x,b_{<j})^\alpha}{q_j(b_j\mid x,b_{<j})}
=
\log
\prod_{j=1}^J
\frac{p(b_j\mid x,b_{<j})^\alpha}{q_j(b_j\mid x,b_{<j})}.
\]
Exponentiating yields
\[
W_J(b_{1:J})
=
\prod_{j=1}^J
\frac{p(b_j\mid x,b_{<j})^\alpha}{q_j(b_j\mid x,b_{<j})}.
\]
Using the definitions of $\gamma_J$ and the sequential proposal factorization,
\[
\gamma_J(b_{1:J}\mid x)
=
\prod_{j=1}^J p(b_j\mid x,b_{<j})^\alpha,
\qquad
q(b_{1:J}\mid x)
=
\prod_{j=1}^J q_j(b_j\mid x,b_{<j}),
\]
which proves \eqref{eq:full-importance-ratio}.
\end{proof}

\paragraph{Discussion and scope.}
\rededit{Theorem~\ref{thm:future-value-factorization} is the conceptual centerpiece: it shows that future value is exactly the missing factor in the true block-boundary marginal. Theorem~\ref{thm:bounded-test-snis} then gives a correct finite-$P$ approximation guarantee for bounded observables, avoiding a support-free total-variation claim over all measurable sets. Corollary~\ref{cor:minimal-apps-is} formalizes that the proposal-corrected minimal APPS backbone is already a valid blockwise importance-sampling / SMC construction for the sequence-level power target.}

\paragraph{Important note on APF selection potentials.}
\rededit{The target-preservation statement in Corollary~\ref{cor:minimal-apps-is} applies to the minimal proposal-corrected sampler. If one resamples using modified selection weights proportional to $w_j^{(i)}\psi_j^{(i)\eta}$, then preserving the same underlying target generally requires an explicit auxiliary-SMC correction. Without such a correction, the modified resampling law should be interpreted as a finite-budget selection heuristic rather than an exact target-preserving transformation.}

\section{Rollout and learned selection potentials}
\label{app:learned-apf}

We describe the two practical realizations of the selection potential in
\eqref{eq:pps-apf-resample}. In rollout APF, the future-value signal is computed directly from short
lookahead launched from the current prefixes. In learned APF, the same signal is amortized by a
lightweight predictor evaluated at the same resampling boundaries while keeping the base language
model fixed.

\subsection{Boundary-level supervision}
Rollout supervision is collected at resampling boundaries. For each candidate prefix evaluated at
boundary $j$, we store
\[
\bigl(h_j^{(i)},\; y_j^{(i)},\; g_j^{(i)}\bigr),
\qquad
y_j^{(i)} := \log \psi_j^{(i)},
\]
where $h_j^{(i)}\in\mathbb{R}^d$ is the boundary representation, $y_j^{(i)}$ is the rollout-based
selection potential, and $g_j^{(i)}$ identifies the corresponding boundary group. All examples with
the same group identifier arise from the same resampling decision.

The group structure is fundamental. In a particle system, future value matters only through how
selection redistributes representation within a boundary group. We therefore preserve groups across
shard merging, with an ablated filter for degenerate groups if needed, and split train/validation
data by group. To reduce avoidable variance, rollout targets are generated with a deterministic
seeding policy during data construction.

\subsection{Effective training target}
The learned head is trained to predict the signal actually used at decode time. Let $g$ denote a
boundary group and let $y^{(i)}$ be the raw rollout score for candidate $i\in g$. The effective
target is
\begin{equation}
\tilde y^{(i)}
=
\eta \cdot \mathrm{clip}\!\left(
y^{(i)} - \frac{1}{|g|}\sum_{k\in g} y^{(k)}
\right),
\label{eq:learned-apf-effective-target}
\end{equation}
with the obvious simplifications when centering or clipping is disabled.

This centering makes the target explicitly boundary-relative: additive offsets within a group are
irrelevant, because selection depends only on how prefixes compete for representation at the same
resampling boundary. Equivalently, the learned target is defined only up to within-boundary
competition: what matters is not the absolute scale of future value, but how it tilts selection
probabilities and expected offspring within the current population.

We considered two target parameterizations. \emph{Standardized-target distillation} predicts globally
normalized rollout scores and relies on the loss to recover boundary structure. Our main formulation,
\emph{decode-aligned distillation}, predicts the effective target in
\eqref{eq:learned-apf-effective-target} directly. Unless noted otherwise, the learned-APF results in
the main text use decode-aligned distillation.

\subsection{Head and objective}
The learned selection potential is produced by a lightweight multilayer perceptron
\[
f_\theta : \mathbb{R}^d \to \mathbb{R},
\]
applied to the boundary representation $h_j^{(i)}$. Since the learned potential is used only through
selection within a boundary, the objective emphasizes relative fidelity within groups rather than
absolute calibration across unrelated boundaries.

For decode-aligned distillation, let $\hat y^{(i)}$ denote the student prediction after applying the
same groupwise centering, clipping, and scaling used in decoding. The training loss is a weighted sum
of:
\begin{enumerate}
\item a smooth-$L_1$ regression term between $\hat y$ and $\tilde y$;
\item a group-centered regression term;
\item a listwise distillation term matching the rollout and learned  softmax distributions within each boundary;
\item a within-group pairwise ranking loss; and
\item a top-1 cross-entropy term matching the winning prefix selected by rollout APF.
\end{enumerate}
Group weights are derived from the rollout signal, assigning larger weight to boundaries with larger
within-group spread or winner--runner-up margin. These terms all target the same object from
different angles: not the absolute value of a prefix in isolation, but its role in the boundary-level
competition for survival and offspring.

\subsection{Optimization and model selection}
We train with AdamW and cosine annealing. Minibatches are formed from whole groups rather than
independent rows, since the structured losses are defined over candidates from the same boundary. We
also maintain an exponential moving average of parameters for evaluation.

Held-out evaluation is group-aware. In addition to pointwise error, we compute group top-1
agreement, group pairwise accuracy, and Pearson correlation with rollout effective scores.
Checkpoints are selected primarily by group top-1 agreement, with group pairwise accuracy used as a
tie-breaker. This matches the intended role of the learned head: improving resampling decisions
rather than fitting raw rollout values.

\subsection{Summary}
The learned APF head is trained to imitate the effective rollout signal used by APPS at resampling
time. The construction is group-aware throughout: supervision is collected at resampling boundaries,
objectives model competition within a boundary, batching preserves boundary structure, and checkpoint
selection is based on boundary-level selection quality.

\section{Blockwise execution, cache reuse, and dynamic allocation}
\label{app:blockwise-execution}
\noindent\textbf{Blockwise execution and KV reuse.}
APPS is formulated at the block level but executed through a streaming KV-cache path. The model is evaluated once on the prompt, after which decoding proceeds token by token while maintaining one KV cache per particle; at resampling boundaries, caches are reordered according to the selected ancestors, analogous to beam search. The method is thus blockwise in logic but streaming in execution.

\smallskip
\subsection{Finite-budget population reallocation}
\label{app:dynamic-allocation}

Under dynamic allocation, APPS updates the active population size only at block boundaries. The underlying idea is simple: boundaries that remain unresolved should retain more particle mass, while collapsed boundaries should release budget.

Let $\{\log \tilde w_j^{(i)}\}_{i=1}^{P_j}$ denote the selection scores at boundary $j$, after any rollout or learned \rededit{selection potential} has been incorporated. We summarize boundary uncertainty by an ambiguity score $a_j \in [0,1]$. In practice, we use a simple signal derived either from the top-one/top-two gap or from the entropy of the normalized selection weights. A representative entropy-based form is
\[
\bar \pi_j^{(i)}
=
\frac{\exp(\log \tilde w_j^{(i)})}
     {\sum_{k=1}^{P_j}\exp(\log \tilde w_j^{(k)})},
\qquad
a_j
=
\frac{-\sum_{i=1}^{P_j}\bar \pi_j^{(i)} \log \bar \pi_j^{(i)}}
     {\log P_j}.
\]
Large $a_j$ indicates a diffuse, unresolved population; small $a_j$ indicates that selection mass is already concentrated.

The next active population size is then chosen by a monotone budget-allocation rule,
\[
P_{j+1}
=
g(a_j; P_{\min}, P_{\max}),
\qquad
P_{\min} \le P_{j+1} \le P_{\max},
\]
where $g$ increases with ambiguity. In our implementation, this is realized as a clipped linear map,
\[
P_{j+1}
=
\mathrm{clip}\!\left(
P_{\min}
+
\left\lfloor
(P_{\max}-P_{\min})\, a_j
\right\rceil,
\;
P_{\min},
\;
P_{\max}
\right),
\]
and the updated budget is realized at the next resampling step. Thus dynamic allocation does not alter the proposal or weighting rule; it changes only how much particle mass remains active at subsequent boundaries.

To characterize this mechanism empirically, we report several boundary-level diagnostics: the average active population over blocks,
\[
\bar P = \frac{1}{J}\sum_{j=1}^J P_j,
\]
the fraction of blocks at minimum population,
\[
\frac{1}{J}\sum_{j=1}^J \mathbf{1}\{P_j = P_{\min}\},
\]
the mean ambiguity fraction,
\[
\bar a = \frac{1}{J}\sum_{j=1}^J a_j,
\]
the mean absolute change in active population size,
\[
\frac{1}{J-1}\sum_{j=1}^{J-1} |P_{j+1}-P_j|,
\]
the number of allocation increases and decreases, and the average number of unique ancestors after resampling. Together, these diagnostics distinguish runs that are merely faster from runs that genuinely reallocate finite compute toward unresolved regions without collapsing useful diversity.

\smallskip
\noindent\textbf{Scores across resampling events.}
We maintain two notions of score. The first is a local selection score, derived from the incremental weight since the last resampling step and used to form the resampling distribution at the current boundary. The second is a cumulative ancestry score, copied through sampled parent indices and kept comparable across resampling events for final particle selection. This keeps local resampling decisions numerically stable while retaining a trajectory-level score for final selection across resampling events.

\subsection{Further Runtime--Accuracy Analysis under Dynamic Allocation}
\label{app:further-runtime-scatter}

Figure~\ref{fig:rollout_learned_dynamic_tradeoff} provides a complementary runtime--accuracy view of the APF-guided variants under dynamic allocation. The plot focuses on the operating regions of learned APF and rollout APF, showing how the two future-value estimators trade off online cost and final accuracy.

\begin{figure*}[t]
    \centering
    \includegraphics[width=\textwidth]{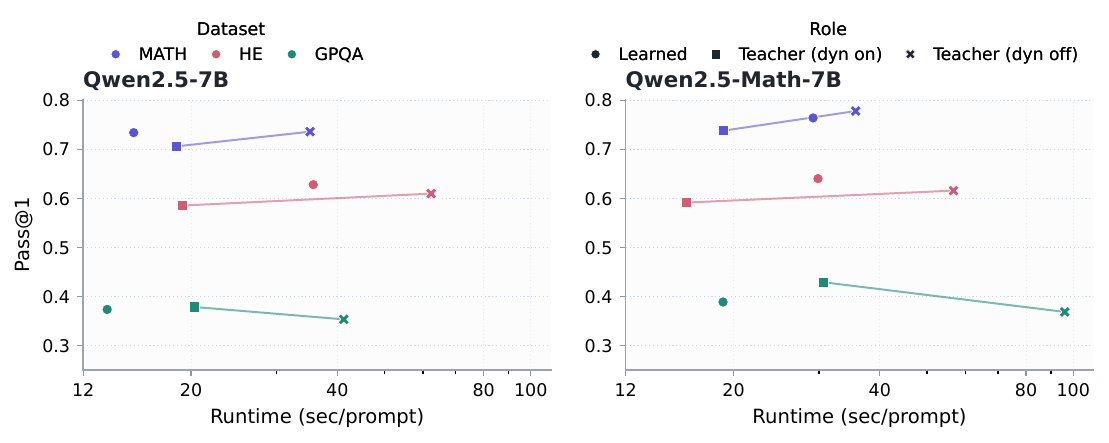}
    \caption{
    \textbf{Wall-clock runtime--accuracy trade-offs for learned and rollout APF.}
    Each point is a completed full-benchmark run, plotted by runtime per prompt and \texttt{pass@1}.
    Dynamic allocation acts primarily as a compute-control mechanism, reducing the online cost of APF-guided decoding by adapting the active particle population during generation.
    Learned APF provides the strongest overall speed--accuracy trade-off on MATH500 and HumanEval, while on GPQA, rollout APF reaches higher \texttt{pass@1} at higher wall-clock cost.
    Overall, learned APF is the more efficient future-value estimator, whereas rollout APF can be more accurate when its online estimates justify the additional compute.
    }
    \label{fig:rollout_learned_dynamic_tradeoff}
\end{figure*}

\subsection{Main Hyperparameter Settings}
\label{app:hparams}

Table~\ref{tab:apf-main-hparams} summarizes the main decoding hyperparameters used for APPS in our experiments. We align the evaluation protocol with Scalable Power Sampling (SPS)~\citep{ji2026scalable} wherever possible, including the base models, datasets, prompts, evaluators, maximum generation length, sampling temperature, and particle budget. In particular, SPS uses $\alpha=4$, temperature $1/\alpha=0.25$, maximum generation length $T_{\max}=3072$ with EOS early stopping, and reports Best-of-$N$ with $N=32$ as well as MCMC power sampling with block size $B=192$ and $N_{\mathrm{MCMC}}=10$. We use the same global generation cap and temperature, and evaluate APPS at the same main particle budget $P=32$.

For the main APPS results, we set $P=32$, sampling temperature $\tau=0.25$, and maximum generation length $T_{\max}=3072$ across all benchmarks, terminating generation early upon EOS. For APF-guided variants, we fix the test-time APF strength to $\eta=0.5$; for p-only APPS, this parameter is inactive because no auxiliary future-value selection potential is used. Rollout APF uses two short lookahead rollouts per APF boundary, each with horizon $H_{\mathrm{roll}}=16$. We use dataset-specific block sizes: $B=192$ for MATH500, $B=16$ for HumanEval, and $B=64$ for GPQA.

Unless explicitly noted, APPS variants within a benchmark are evaluated under the same base model, prompt format, evaluator, generation cap, EOS stopping rule, particle-budget cap, and dataset-specific block schedule. The variants are distinguished by their boundary-level selection signal: absent in p-only APPS, rollout-estimated in rollout APF, and learned in learned APF.

\begin{table}[t]
\centering
\small
\setlength{\tabcolsep}{4.5pt}
\begin{tabular}{lcccccccc}
\toprule
Dataset
& Particles $P$
& Power $\alpha$
& Temp. $\tau$
& $T_{\max}$
& APF $\eta$
& Block $B$
& Rollouts
& Horizon $H_{\mathrm{roll}}$ \\
\midrule
MATH500   & 32 & 4 & 0.25 & 3072 & 0.5 & 192 & 2 & 16 \\
HumanEval & 32 & 4 & 0.25 & 3072 & 0.5 & 16  & 2 & 16 \\
GPQA      & 32 & 4 & 0.25 & 3072 & 0.5 & 64  & 2 & 16 \\
\bottomrule
\end{tabular}

\vspace{0.4em}
\caption{
\textbf{Main APPS decoding hyperparameters.}
We use $P=32$ particles, power parameter $\alpha=4$, sampling temperature $\tau=1/\alpha=0.25$, and maximum generation length $T_{\max}=3072$ for the main APPS results, with generation terminating early upon EOS. These global settings follow the SPS evaluation protocol~\citep{ji2026scalable}. For APF-guided variants, the test-time APF strength is fixed to $\eta=0.5$; for p-only APPS, $\eta$ is inactive because no auxiliary future-value selection potential is applied. Rollout APF uses two short lookahead rollouts per APF boundary with horizon $H_{\mathrm{roll}}=16$. The block size $B$ is dataset-specific. The main controlled difference across APPS variants is the boundary-level selection signal: p-only, rollout APF, or learned APF.
}
\label{tab:apf-main-hparams}
\end{table}

For learned APF, we train a lightweight boundary-value head to approximate the APF selection potential from rollout-derived supervision. Training examples are constructed at APF decision boundaries, with train--validation splits performed by boundary group to avoid leakage across closely related prefixes. The head is trained with AdamW and cosine learning-rate annealing using group-preserving minibatches of size $512$, dropout $0.10$, weight decay $5\times 10^{-2}$, and an exponential moving average of parameters with decay $0.995$. We train for up to $60$ epochs with early stopping patience $12$ and minimum validation improvement $10^{-3}$.

For decode-aligned distillation, rollout-derived targets are smoothed using $\eta_{\mathrm{distill}}=0.4$ before fitting the learned APF potential. This coefficient is used only for target shaping during distillation and is distinct from the test-time APF selection strength $\eta=0.5$. At test time, the learned head serves only as a boundary-level selection potential; the base model, propagation sampler, evaluator, generation cap, and stopping rule are unchanged.

\subsection{Hyperparameter Sensitivity}
\label{app:hparam_sensitivity}

Figure~\ref{fig:qwen_math_hparam_sensitivity} reports a controlled sensitivity sweep over block size $B$ and APF strength $\eta$ for Qwen2.5-Math-7B. For clarity of interpretation, dynamic allocation is disabled in this sweep so that each cell reflects the effect of $B$ and $\eta$ under the same fixed particle budget. The results suggest that APF-guided APPS is not driven by a single fragile setting: MATH500 is stable across a broad range of choices, while HumanEval and GPQA show stronger task-specific dependence on $B$ and $\eta$. Rollout APF can achieve the strongest individual cells but is more sensitive to $\eta$, whereas learned APF is generally smoother, consistent with its role as an amortized selection potential.

\begin{figure}[t]
    \centering
    \includegraphics[width=\textwidth]{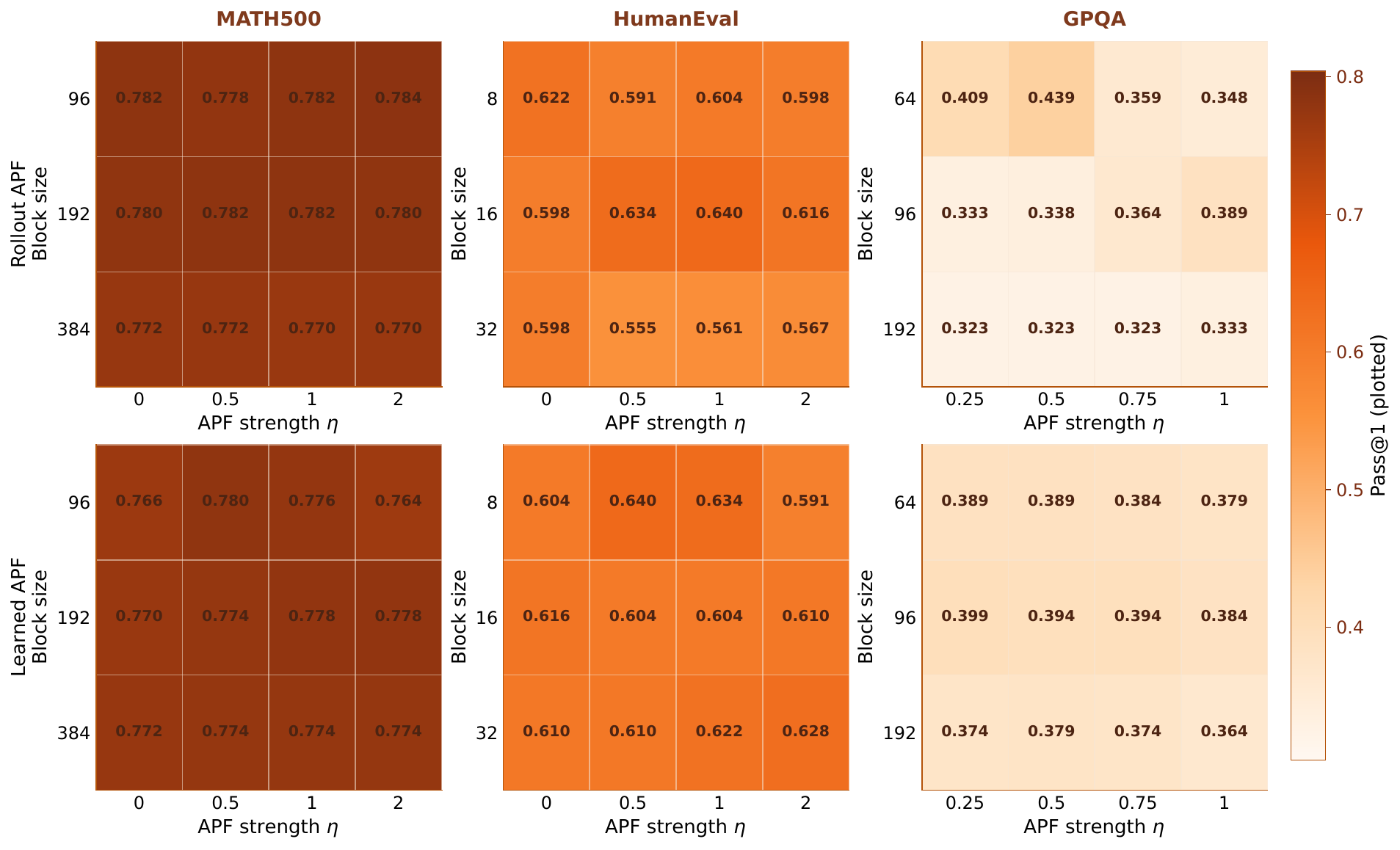}
    \caption{
    \textbf{Hyperparameter sensitivity of APF-guided APPS on Qwen2.5-Math-7B.}
    We sweep block size $B$ and APF strength $\eta$ for rollout APF and learned APF on MATH500, HumanEval, and GPQA, reporting \texttt{pass@1} in each cell.
    For clarity of interpretation, dynamic allocation is disabled so that all cells are evaluated under the same fixed particle budget; all other decoding settings are held fixed within each benchmark.
    MATH500 is relatively stable across the tested settings, while HumanEval and GPQA exhibit stronger task-specific dependence on $B$ and $\eta$.
    Rollout APF obtains the strongest individual cells in several settings but is more sensitive to $\eta$, reflecting the variance of finite-sample, finite-horizon online lookahead.
    Learned APF yields smoother performance across the grid, consistent with an amortized selection potential that reduces online variance and cost.
    Overall, APF-guided APPS remains competitive across a range of nearby hyperparameter choices, with learned APF showing smoother behavior across the tested grid.
    }
    \label{fig:qwen_math_hparam_sensitivity}
\end{figure}

\end{document}